\def\eqref#1{equation~\ref{#1}}
\def\1{\bm{1}}
\def\0{\bm{0}}
\def\vf{{\bm{f}}}
\def\vh{{\bm{h}}}
\def\vr{{\bm{r}}}
\def\vu{{\bm{u}}}
\def\vv{{\bm{v}}}
\def\vx{{\bm{x}}}
\DeclareMathAlphabet{\mathsfit}{\encodingdefault}{\sfdefault}{m}{sl}
\SetMathAlphabet{\mathsfit}{bold}{\encodingdefault}{\sfdefault}{bx}{n}
\newcommand{\laplace}{\mathrm{Laplace}} 
\newcommand{\E}{\mathbb{E}}
\newcommand{\R}{\mathbb{R}}
\newcommand{\RR}{\mathbb{R}}
\newcommand{\bx}{\mathbf{x}}
\newcommand{\cN}{\mathcal{N}}
\newcommand{\tbf}{\tilde{\vf}}
\newcommand{\bg}{\mathbf{g}}
\newcommand{\barbf}{\bar{\vf}}
\newcommand{\tbx}{\tilde{\bm{x}}}
\newcommand{\tbv}{\tilde{\bm{v}}}
\newtheoremstyle{break}
  {}
  {}
  {\itshape}
  {}
  {\bfseries}
  {.}
  {\newline}
  {}
\newtheorem{theorem}{Theorem}
\newtheorem{lemma}{Lemma}
\newtheorem{corollary}{Corollary}
\newtheorem{theoremrestate}{Theorem}
\newtheorem{lemmarestate}{Lemma}
\newtheorem*{proposition*}{Proposition}
\newtheorem{definition}{Definition}[section]
\begin{document}

%

%

\twocolumn[

\aistatstitle{Wasserstein Smoothing: Certified Robustness against Wasserstein Adversarial Attacks}

\aistatsauthor{ Alexander Levine and Soheil Feizi }

\aistatsaddress{University of Maryland, College Park\\ \{alevine0, sfeizi\}@cs.umd.edu } ]

\begin{abstract}
In the last couple of years, several adversarial attack methods based on different threat models have been proposed for the image classification problem. Most existing defenses consider additive threat models in which sample perturbations have bounded $L_p$ norms. These defenses, however, can be vulnerable against adversarial attacks under non-additive threat models. An example of an attack method based on a non-additive threat model is the Wasserstein adversarial attack proposed by \cite{pmlr-v97-wong19a}, where the distance between an image and its adversarial example is determined by the Wasserstein metric (``earth-mover distance'') between their normalized pixel intensities. Until now, there has been no certifiable defense against this type of attack. In this work, we propose the first defense with certified robustness against Wasserstein Adversarial attacks using randomized smoothing. We develop this certificate by considering the space of possible flows between images, and representing this space such that Wasserstein distance between images is upper-bounded by $L_1$ distance in this flow-space. We can then apply existing randomized smoothing certificates for the $L_1$ metric.  In MNIST and CIFAR-10 datasets, we find that our proposed defense is also practically effective, demonstrating significantly improved accuracy under Wasserstein adversarial attack compared to unprotected models.

\end{abstract}
\section{Introduction}
In recent years, adversarial attacks against machine learning systems, and defenses against these attacks, have been heavily studied \citep{szegedy2013intriguing, madry2017towards, carlini2017towards}. Although these attacks have been applied in a variety of domains, image classification tasks remain a major focus of research. In general, for a specified image classifier $\vf$, the goal of an adversarial attack on an image $\vx$ is to produce a perturbed image $\tbx$ that is imperceptibly `close' to $\vx$, such that $\vf$ classifies $\tbx$ differently than $\vx$. This `closeness' notion can be measured in a variety of different ways under different threat models. Most existing attacks and defenses consider additive threat models where the $L_p$ norm of $\tbx-\vx$ is bounded. 

Recently, non-additive threat models \citep{pmlr-v97-wong19a, laidlaw2019functional, engstrom2018rotation, assion2019attack} have been introduced which aim to minimize the distance between $\vx$ and $\tbx$ according to other metrics. Among these attacks is the attack introduced by \cite{pmlr-v97-wong19a} which considers the Wasserstein distance between $\vx$ and $\tbx$, normalized such that the pixel intensities of the image can be treated as probability distributions. Informally, the Wasserstein distance between probability distributions $\vx$ and $\tbx$ measures the minimum cost to `transport' probability mass in order to transform $\vx$ into $\tbx$, where the cost scales with both the amount of mass transported and the distance over which it is transported with respect to some underlying metric. The intuition behind this threat model is that shifting pixel intensity a short distance across an image is less perceptible than moving the same amount of pixel intensity a larger distance (See Figure \ref{fig:wattack} for an example of a Wasserstein adversarial attack.) 

\begin{figure}[t]
    \centering
    \includegraphics[width=0.45\textwidth]{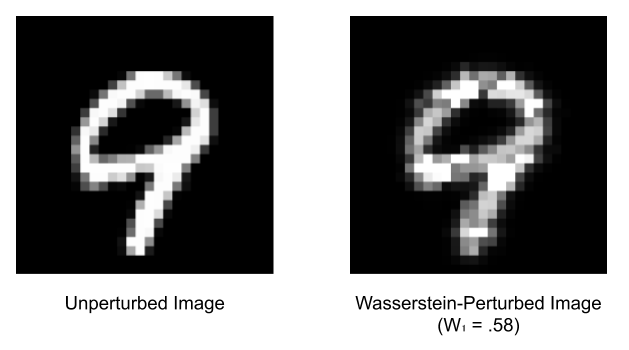}
    \caption{An illustration of Wasserstein adversarial attack \citep{pmlr-v97-wong19a}. }
    \label{fig:wattack}
\end{figure}
A variety of practical approaches have been proposed to make classifiers robust against adversarial attack including adversarial training \citep{madry2017towards}, defensive distillation \citep{papernot2016distillation}, and obfuscated gradients \citep{papernot2017practical}. However, as new defenses are proposed, new attack methodologies are often rapidly developed which defeat these defences \citep{tramer2017ensemble,athalye2018obfuscated,carlini2016defensive}. While updated defences are often then proposed \citep{tramer2017ensemble}, in general, we cannot be confident that new attacks will not in turn defeat these defences.

To escape this cycle, approaches have been proposed to develop certifiably robust classifiers \citep{wong2018provable, gowal2018effectiveness, lecuyer2018certified, li2018second, cohen2019certified, salman2019provably}: in these classifiers, for each image $\vx$, one can calculate a radius $\rho$ such that it is provably guaranteed that any other image $\tbx$ with distance less than $\rho$ from $\vx$ will be classified similarly to $\vx$. This means that no adversarial attack can ever be developed which produces adversarial examples to the classifier within the certified radius.

One effective approach to develop certifiably robust classification is to use randomized smoothing with a probabilistic robustness certificate \citep{lecuyer2018certified, li2018second, cohen2019certified, salman2019provably}. In this approach, one uses a smoothed classifier $\barbf(\vx)$, which represents the expectation of $\vf(\vx)$ over random perturbations of $\vx$. Based on this smoothing, one can derive an upper bound on how steeply the scores assigned to each class by $\barbf$ can change, which can then be used to derive a radius $\rho$ in which the highest class score must remain highest\footnote{In practice, samples are used to estimate the expectation $\barbf(\vx)$, producing an empirical smoothed classifier $\tbf(\vx)$: the certification is therefore probabilistic, with a degree of certainty dependent on the number of samples.}.

In this work, we present the first certified defence against Wasserstein adversarial attacks using an adapted randomized smoothing approach, which we call \textit{Wasserstein smoothing}. To develop the robustness certificate, we define a (non-unique) representation of the difference between two images, based on the flow of pixel intensity necessary to construct one image from another.  In this representation, we show that the $L_1$ norm of the minimal flow between two images is equal to the Wasserstein distance between the images. This allows us to apply existing $L_1$ smoothing-based defences, by adding noise in the space of these representations of flows. We show that empirically that this gives improved robustness certificates, compared to using a weak upper bound of Wasserstein distance given by randomized smoothing in the feature space of images directly. We also show that our Wasserstein smoothing defence protects against Wasserstein adversarial attacks empirically, with significantly improved robustness compared to baseline models. For small adversarial perturbations on the MNIST dataset, our method achieves higher accuracy under adversarial attack than all existing practical defences for the Wasserstein threat model.

In summary, we make the following contributions:
\begin{itemize}
    \item We develop a novel certified defence for the Wasserstein adversarial attack threat model. This is the first certified defence, to our knowledge, that has been proposed for this threat model.
    \item We demonstrate that our certificate is nonvacuous, in that it can certify Wasserstein radii larger than those which can be certified by exploiting a trivial $L_1$ upper bound on Wasserstein distance.
    \item We demonstrate that our defence effectively protects against existing Wasserstein adversarial attacks, compared to an unprotected baseline. 
\end{itemize}


\section{Background}
Let $\vx\in[0,1]^{n \times m}$ denote a two dimensional image, of height $n$ and width $m$. We will normalize the image such that $\sum_i \sum_j x_{i,j} = 1$, so that $\vx$ can be interpreted as a probability distribution on the discrete support of pixel coordinates of the two-dimensional image.\footnote{In the case of multi-channel color images, the attack proposed by \cite{pmlr-v97-wong19a} does not transport pixel intensity between channels. This allows us to defend against these attacks using our 2D Wasserstein smoothing with little modification. See Section \ref{sec:color}, and Corollary \ref{threeD} in the appendix} Following the notation of \cite{pmlr-v97-wong19a}, we define the p-Wasserstein distance between $\vx$ and $\vx'$ as:
\begin{definition}
Given two distributions $\vx, \vx' \in[0,1]^{n \times m} $, and a distance metric $d \in ([n] \times [m]) \times ([n] \times [m]) \rightarrow \R$ , the p-Wasserstein distance as: 
\begin{align} \label{wassdef}
    W_p(\vx,\vx') = & \min_{\Pi \in \R_{+}^{(n\cdot m) \times (n\cdot m)}} <\Pi,C>, \\
    &\Pi\1=\vx, \,\, \Pi^T\1=\vx', \nonumber\\
    & C_{(i,j), (i',j')} := \left[d\left((i,j),(i',j')\right)\right]^p. \nonumber
\end{align}
$C_{(i,j), (i',j')}$ is the cost of transporting a mass unit from the position $(i,j)$ to $(i',j')$ in the image.
\end{definition}
Note that, for the purpose of matrix multiplication, we are treating $\vx,\vx'$ as vectors of length $nm$. Similarly, the transport plan matrix $\Pi$ and the cost matrix $C$ are in $\RR^{nm\times nm}$.

Intuitively, $\Pi_{(i,j), (i',j')}$ represents the amount of probability mass to be transported from pixel $(i,j)$ to $(i',j')$, while  $C_{(i,j), (i',j')}$ represents the cost per unit probability mass to transport this probability.
We can choose $d(.,.)$ to be any measure of distance between pixel positions in an image. For example, in order to represent the $L_1$ distance metric between pixel positions, we can choose:
\begin{equation}
   d\left((i,j),(i',j')\right) = |i-i'|+|j-j'|.
\end{equation}
Moreover, to represent the $L_2$ distance metric between pixel positions, we can choose:
\begin{equation}
  d\left((i,j),(i',j')\right) = \sqrt{(i-i')^2+(j-j')^2}.
\end{equation}
Our defence directly applies to the 1-Wasserstein metric using the $L_1$ distance as the metric $d(.,.)$, while the attack developed by  \cite{pmlr-v97-wong19a} uses the $L_2$ distance. However, because images are two dimensional, these differ by at most a constant factor of $\sqrt{2}$, so we adapt our certificates to the setting of \cite{pmlr-v97-wong19a} by simply scaling our certificates by $1/\sqrt{2}$. All experimental results will be presented with this scaling. We emphasize that this it {\it not} the distinction between 1-Wasserstein and 2-Wasserstein distances: this paper uses the 1-Wasserstein metric, to match the majority of the experimental results of \cite{pmlr-v97-wong19a}.

To develop our certificate, we rely an alternative linear program formulation for the 1-Wasserstein distance on a two-dimensional image with the $L_1$ distance metric, provided by \cite{ling2007efficient}:
\begin{align} \label{wassl1lp}
    W_1(\vx,\vx') =\min_{\bg} \sum _{\substack{(i,j)}}~ \sum _{\substack{(i',j')\in \cN\left(i,j\right)}} \bg_{(i,j),(i',j')}
\end{align}    
where $\bg \geq 0$ and $\forall (i,j)$, 
\begin{align}  
    \sum _{\substack{(i',j')\in \cN\left(i,j\right)}}  \bg_{(i,j),(i',j')} - \bg_{(i',j'),(i,j)} = \bx'_{i,j} -\bx_{i,j}\nonumber
\end{align}
Here, $\cN\left(i,j\right)$ denotes the (up to) four immediate (non-diagonal) neighbors of the position $(i,j)$; in other words, $\cN\left(i,j\right) = \{(i',j')\,\,|\,\,|i-i'| + |j-j'| = 1\}$. For the $L_1$ distance in two dimensions, \cite{ling2007efficient} prove that this formulation is in fact equivalent to the linear program given in Equation \ref{wassdef}. Note that only elements of  $\bg$ with $|i-i'| + |j-j'| = 1$ need to be defined: this means that the number of variables in the linear program is approximately $4nm$, compared to the $n^2m^2$ elements of $\Pi$ in Equation \ref{wassdef}. While this was originally used to make the linear program more tractable to be solved directly, we exploit the form of this linear program to devise a randomized smoothing scheme in the next section.
\section{Robustness Certificate}
\begin{figure}[t]
    \centering
    \includegraphics[width=0.40\textwidth]{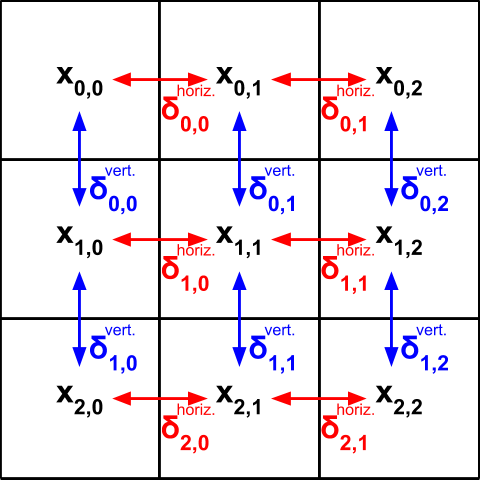}
    \caption{Indexing of the elements of the local flow map $\bm{\delta}$, in relation to the pixels of the image $\vx$, with n=m=3. }
    \label{fig:loc-flow}
\end{figure}
In order to present our robustness certificate, we first introduce some notation. Let $\bm{\delta} = \{\bm{\delta}^{\text{vert.}}\in \R^{(n-1) \times m}, \bm{\delta}^{\text{horiz.}} \in \R^{n \times (m-1)} \}$ denote a \textit{local flow plan}. It specifies a net flow between adjacent pixels in an image $\vx$, which, when applied, transforms $\vx$ to a new image $\vx'$. See Figure 2 for an explanation of the indexing. For compactness, we write $\bm{\delta} \in  \R^{r}$ where $r=(n-1)m +n(m-1)\approx 2nm$, and in general refer to the space of possible local flow plans as the \textit{flow domain}. We define the function $\Delta$, which applies a local flow to a distribution.
\begin{definition} \label{flowdef} The local flow plan application function $ \Delta \in \R^{n \times m} \times \R^{r} \rightarrow \R^{n \times m}$ is defined as:
\begin{equation} 
    \Delta(\vx,\bm{\delta})_{i,j} = \vx_{i,j} + \bm{\delta}^{\text{vert.}}_{i-1, j} -  \bm{\delta}^{\text{vert.}}_{i, j}  + \bm{\delta}^{\text{horiz.}}_{i, j-1} -  \bm{\delta}^{\text{horiz.}}_{i, j}
\end{equation}
where we let $\bm{\delta}^{\text{vert.}}_{0, j} = \bm{\delta}^{\text{vert.}}_{n, j} = \bm{\delta}^{\text{horiz.}}_{i,0} = \bm{\delta}^{\text{horiz.}}_{i, m} = 0$.\footnote{Note that the new image $\vx' = \Delta(\vx,\bm{\delta})$ is not necessarily a  probability distribution because it may have negative components. However, note that normalization is preserved: $\sum_i \sum_j x'_{i,j} = 1$. This is because every component of $\bm{\delta}$ is added once and subtracted once to elements in $\vx$.}
\end{definition}

Note that local flow plans are additive:
\begin{equation} \label{flowadd}
    \Delta(\Delta(\vx,\bm{\delta}),\bm{\delta}') = \Delta(\vx,\bm{\delta} + \bm{\delta}')
\end{equation}
Using this notation, we make a simple transformation of the linear program given in Equation \ref{wassl1lp}, removing the positivity constraint from the variables and reducing the number of variables to $\sim 2nm$:
\begin{lemma} \label{wassflow}
For any normalized probability distributions $\vx, \vx' \in[0,1]^{n \times m}$:
\begin{equation}
   W_{1}(\vx,\vx')= \min_{ \bm{\delta}: \,\, \vx' =  \Delta(\vx,\bm{\delta}) } \|\bm{\delta}\|_1 
\end{equation}
where $W_1$ denotes the 1-Wasserstein metric, using the $L_1$ distance as the underlying distance metric $d$. 
\end{lemma}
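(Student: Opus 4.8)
The plan is to reduce the claimed identity to the linear program of \cite{ling2007efficient} stated in Equation~\ref{wassl1lp}, which is already known to equal $W_1(\vx,\vx')$. That program has, for each \emph{ordered} pair of adjacent pixels, a nonnegative variable $\bg_{(i,j),(i',j')}$, an objective equal to the sum of all these variables, and a flow-conservation constraint at every pixel. Our formulation instead has, for each \emph{unordered} pair of adjacent pixels, a single unconstrained real variable (a component of $\bm{\delta}$), objective $\|\bm{\delta}\|_1$, and the single constraint $\vx' = \Delta(\vx,\bm{\delta})$. So the task is to show these two minimization problems have the same optimal value, which I would do by exhibiting cost-non-increasing maps in both directions — essentially the standard equivalence between directed minimum-cost flows and undirected (signed) ones when every edge has the same unit cost.

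For the direction $\min_{\bm\delta}\|\bm{\delta}\|_1 \le W_1$, I would take any feasible $\bg$ for Equation~\ref{wassl1lp} and pass to \emph{net} flows: for each edge between adjacent pixels $u,v$ (with a fixed orientation, say top-to-bottom for vertical edges and left-to-right for horizontal ones), set the corresponding component of $\bm{\delta}$ to $\bg_{u,v}-\bg_{v,u}$, up to the global sign needed to match the convention of Definition~\ref{flowdef}. Then $|\bg_{u,v}-\bg_{v,u}|\le \bg_{u,v}+\bg_{v,u}$ since both are nonnegative, so summing over edges gives $\|\bm{\delta}\|_1 \le \sum\bg$, the objective of Equation~\ref{wassl1lp}. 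One then checks that the per-pixel flow-conservation constraint of Equation~\ref{wassl1lp} is, term by term, exactly the identity $\vx'_{i,j} = \Delta(\vx,\bm{\delta})_{i,j}$, with the stipulation $\bm{\delta}^{\text{vert.}}_{0,j}=\bm{\delta}^{\text{vert.}}_{n,j}=\bm{\delta}^{\text{horiz.}}_{i,0}=\bm{\delta}^{\text{horiz.}}_{i,m}=0$ playing the role of ``no flow across the image boundary.'' Hence this $\bm{\delta}$ is feasible for our problem with cost at most $W_1$. For the reverse direction $W_1 \le \min_{\bm\delta}\|\bm{\delta}\|_1$, I would take any feasible $\bm{\delta}$ and split each component $\delta_e$ into its positive and negative parts, $\bg_{u,v} := \max(\delta_e,0)$ and $\bg_{v,u} := \max(-\delta_e,0)$; these are nonnegative, satisfy $\bg_{u,v}+\bg_{v,u}=|\delta_e|$ and $\bg_{u,v}-\bg_{v,u}=\delta_e$, so the resulting $\bg$ has objective exactly $\|\bm{\delta}\|_1$ and, by the same term-by-term matching, satisfies the flow-conservation constraints. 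Since Equation~\ref{wassl1lp} equals $W_1$, this gives $W_1 \le \|\bm{\delta}\|_1$ for every feasible $\bm{\delta}$, and combining the two inequalities yields the lemma.

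The main obstacle is purely bookkeeping: carefully aligning the index conventions of Figure~\ref{fig:loc-flow} and Definition~\ref{flowdef} with the oriented-edge variables of Equation~\ref{wassl1lp}, getting the signs and the $0$-padded boundary terms right so that ``net outflow at $(i,j)$ equals $\vx'_{i,j}-\vx_{i,j}$'' becomes precisely $\vx' = \Delta(\vx,\bm{\delta})$. I should also remark, so that the statement can assert a genuine minimum rather than an infimum, that the feasible set $\{\bm{\delta}\,:\,\vx' = \Delta(\vx,\bm{\delta})\}$ is a nonempty affine subspace — nonempty because the transport LP of Equation~\ref{wassdef}, and hence Equation~\ref{wassl1lp}, is feasible whenever $\vx,\vx'$ are distributions — and $\|\cdot\|_1$ is coercive, so its restriction to that subspace attains its infimum. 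No deeper idea is needed: the content of the lemma is exactly that this directed/undirected-flow equivalence, together with \cite{ling2007efficient}, lets us replace the $\sim n^2m^2$-variable optimal-transport LP by the $\sim 2nm$-variable $\ell_1$ problem over the flow domain, which is what makes the subsequent randomized-smoothing construction possible.
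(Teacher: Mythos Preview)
Your proposal is correct and follows essentially the same route as the paper: both reduce to the Ling--Okada LP of Equation~\ref{wassl1lp} and establish equality via the same two maps (net-flow $\bg_{u,v}-\bg_{v,u}\mapsto\bm{\delta}$ in one direction, positive/negative split $\max(\pm\delta_e,0)\mapsto\bg$ in the other), with feasibility coming from feasibility of the transport LP. Your added remark that the minimum is attained (coercivity of $\|\cdot\|_1$ on a nonempty affine subspace) is a small nicety the paper omits.
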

In other words, we can upper-bound the Wasserstein distance between two images using the $L_1$ norm of any feasible local flow plan between the two images. This enables us to extend existing results for $L_1$ smoothing-based certificates \citep{lecuyer2018certified} to the Wasserstein metric, by adding noise in the flow domain. 
\begin{definition}
We denote by $\mathcal{L}(\sigma) =  \laplace(0, \sigma)^{r}$ as the Laplace noise with parameter $\sigma$ in the flow domain of dimension $r$.
\end{definition}
Given a classification score function $\vf:  \R^{n \times m} \rightarrow [0,1]^k $, we define $\barbf$ as the \textit{Wasserstein-smoothed} classification function as follows:
\begin{equation}
   \barbf = \underset{\bm{\delta} \sim \mathcal{L}(\sigma)}{\E}\left[ \vf (\Delta(\vx, \bm{\delta}))\right].
\end{equation}

Let $i$ be the class assignment of $\vx$ using the Wasserstein-smoothed classifier $\barbf$ (i.e. $i = \arg\max_{i'} \barbf_{i'}(\vx)$).
\begin{theorem} \label{smoothwas}
For any normalized probability distribution $\vx \in [0,1]^{n \times m}$, if
\begin{equation}
    \barbf_{i}(\vx) \geq e^{2\sqrt{2}\rho / \sigma} \max_{i' \neq i} \barbf_{i'}(\vx) \label{thm1cond}
\end{equation}
then for any perturbed probability distribution $\tbx$ such that $W_1(\vx,\tbx) \leq \rho$, we have:
\begin{equation}
    \barbf_{i}(\tbx) \geq \max_{i' \neq i} \barbf_{i'}(\tbx).
\end{equation}
\end{theorem}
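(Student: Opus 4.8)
The plan is to reduce Theorem~\ref{smoothwas} to a known $L_1$ randomized-smoothing certificate --- the one of \citet{lecuyer2018certified} --- but applied in the flow domain $\R^r$ instead of in pixel space. The reduction rests on two facts already available: Lemma~\ref{wassflow}, which realizes a small Wasserstein perturbation of $\vx$ as $\Delta(\vx,\bm{\delta}^*)$ for some local flow plan $\bm{\delta}^*$ of small $L_1$ norm; and the additivity identity (\ref{flowadd}), which turns a perturbation of the \emph{input} of $\barbf$ into a translation of the \emph{Laplace noise} that defines $\barbf$.

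Concretely, I would first fix an arbitrary perturbed distribution $\tbx$ with $W_1(\vx,\tbx)\le\rho$. By Lemma~\ref{wassflow} there is a local flow plan $\bm{\delta}^*$ with $\tbx=\Delta(\vx,\bm{\delta}^*)$ and $\|\bm{\delta}^*\|_1$ equal to the $1$-Wasserstein distance between $\vx$ and $\tbx$ under the $L_1$ pixel metric. Since $|a|+|b|\le\sqrt2\sqrt{a^2+b^2}$ on the two-dimensional grid, that distance is at most $\sqrt2$ times the $L_2$-pixel-metric Wasserstein distance used by \citet{pmlr-v97-wong19a}, so under the theorem's convention $\|\bm{\delta}^*\|_1\le\sqrt2\,\rho$; this is the sole source of the extra $\sqrt2$ in the exponent. (With the $L_1$ ground metric one would instead have $\|\bm{\delta}^*\|_1\le\rho$ and the cleaner threshold $e^{2\rho/\sigma}$.) Then, by (\ref{flowadd}) and the definition of $\barbf$, for every class $c$,
\[
\barbf_c(\tbx)=\E_{\bm{\delta}\sim\mathcal{L}(\sigma)}\!\big[\vf_c\big(\Delta(\vx,\bm{\delta}^*+\bm{\delta})\big)\big],
\]
i.e.\ $\barbf_c(\tbx)$ is the same integral as $\barbf_c(\vx)$, but with the product-Laplace noise translated by the fixed vector $\bm{\delta}^*$.

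Next I would apply the $L_1$/Laplace estimate. Changing variables in that integral, $\barbf_c(\tbx)$ and $\barbf_c(\vx)$ differ only through the density ratio of the $\bm{\delta}^*$-shifted to the unshifted product-Laplace law, and by the reverse triangle inequality applied coordinatewise this ratio lies everywhere in $[\,e^{-\|\bm{\delta}^*\|_1/\sigma},\,e^{\|\bm{\delta}^*\|_1/\sigma}\,]$. Because each $\vf_c$ takes values in $[0,1]$, integrating against this ratio gives, for every class $c$,
\[
e^{-\|\bm{\delta}^*\|_1/\sigma}\,\barbf_c(\vx)\;\le\;\barbf_c(\tbx)\;\le\;e^{\|\bm{\delta}^*\|_1/\sigma}\,\barbf_c(\vx),
\]
which is exactly the Lecuyer et al.\ certificate transplanted to the flow domain. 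Using the lower bound for the winning class $i$ and the upper bound for any competitor $i'\ne i$, and then combining $\|\bm{\delta}^*\|_1\le\sqrt2\,\rho$ with the hypothesis (\ref{thm1cond}),
\[
\frac{\barbf_i(\tbx)}{\barbf_{i'}(\tbx)}\;\ge\;e^{-2\|\bm{\delta}^*\|_1/\sigma}\,\frac{\barbf_i(\vx)}{\barbf_{i'}(\vx)}\;\ge\;e^{-2\sqrt2\rho/\sigma}\,e^{2\sqrt2\rho/\sigma}\;=\;1,
\]
so $\barbf_i(\tbx)\ge\max_{i'\ne i}\barbf_{i'}(\tbx)$, as required.

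I do not expect a genuine obstacle: once Lemma~\ref{wassflow} is in hand, the theorem is a short composition of that lemma, the additivity identity, and a textbook Laplace-mechanism bound, so essentially all the conceptual work lives in Lemma~\ref{wassflow}. The two places that do need care are (i) making the ``translate the noise'' step rigorous --- justifying the change of variables inside the expectation and the uniform density-ratio bound $e^{\pm\|\bm{\delta}^*\|_1/\sigma}$, where measurability of $\vf$ and its $[0,1]$ range are what is really used --- and (ii) keeping the two exponent factors straight: the $2$ comes from invoking the one-sided bound twice (to lower-bound $\barbf_i(\tbx)$ and to upper-bound $\barbf_{i'}(\tbx)$), while the $\sqrt2$ comes only from converting between the $L_1$ and $L_2$ pixel ground metrics. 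Finally, I would note (as the paper does in a footnote) that in practice $\barbf$ is replaced by a sampled estimate, so the end guarantee is probabilistic; that is orthogonal to the deterministic inequality above.
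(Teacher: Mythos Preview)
Your approach is essentially the paper's: invoke Lemma~\ref{wassflow} to obtain $\bm{\delta}^*$ with small $L_1$ norm, use additivity (\ref{flowadd}) to convert the Wasserstein perturbation into a translation of the Laplace noise, then apply the $L_1$/Laplace certificate of \citet{lecuyer2018certified}. The paper does this slightly more indirectly---it fixes a reference distribution $\vu$, defines $\vf^{\text{flow}}(\bm{\delta}):=\vf(\Delta(\vu,\bm{\delta}))$, and applies Lecuyer et al.'s proposition as a black box to $\vf^{\text{flow}}$---but your direct density-ratio computation is equivalent and arguably cleaner.

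However, you have misidentified the source of the $\sqrt2$ in the exponent. In Theorem~\ref{smoothwas}, $W_1$ is the $1$-Wasserstein distance with the $L_1$ ground metric: this is how Lemma~\ref{wassflow} is stated, and the paper's own proof invokes that lemma directly to get $\|\bm{\delta}^*\|_1 = W_1(\vx,\tbx)\le\rho$ with no extra factor. The conversion to the $L_2$ ground metric is carried out \emph{separately} in the Practical Certification Scheme section, where a distinct radius $\rho_2$ is introduced. The $\sqrt2$ in the theorem instead comes from the Laplace parameterization: throughout the paper $\sigma$ denotes the \emph{standard deviation} (see the table headers and the variance discussion in Section~\ref{sec:cmplaplace}), so the scale parameter is $b=\sigma/\sqrt2$ and the coordinatewise density ratio is bounded by $e^{\pm\sqrt2\|\bm{\delta}^*\|_1/\sigma}$, not $e^{\pm\|\bm{\delta}^*\|_1/\sigma}$ as you wrote. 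Your two errors happen to cancel, so your final chain of inequalities is correct, but your parenthetical claim that under the $L_1$ ground metric the threshold would drop to $e^{2\rho/\sigma}$ is false---it remains $e^{2\sqrt2\rho/\sigma}$, which is precisely the statement of the theorem.
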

All proofs are presented in the appendix.
\section{Intuition: One-Dimensional Case}
\begin{figure}[t]
    \centering
    \includegraphics[width=0.5\textwidth]{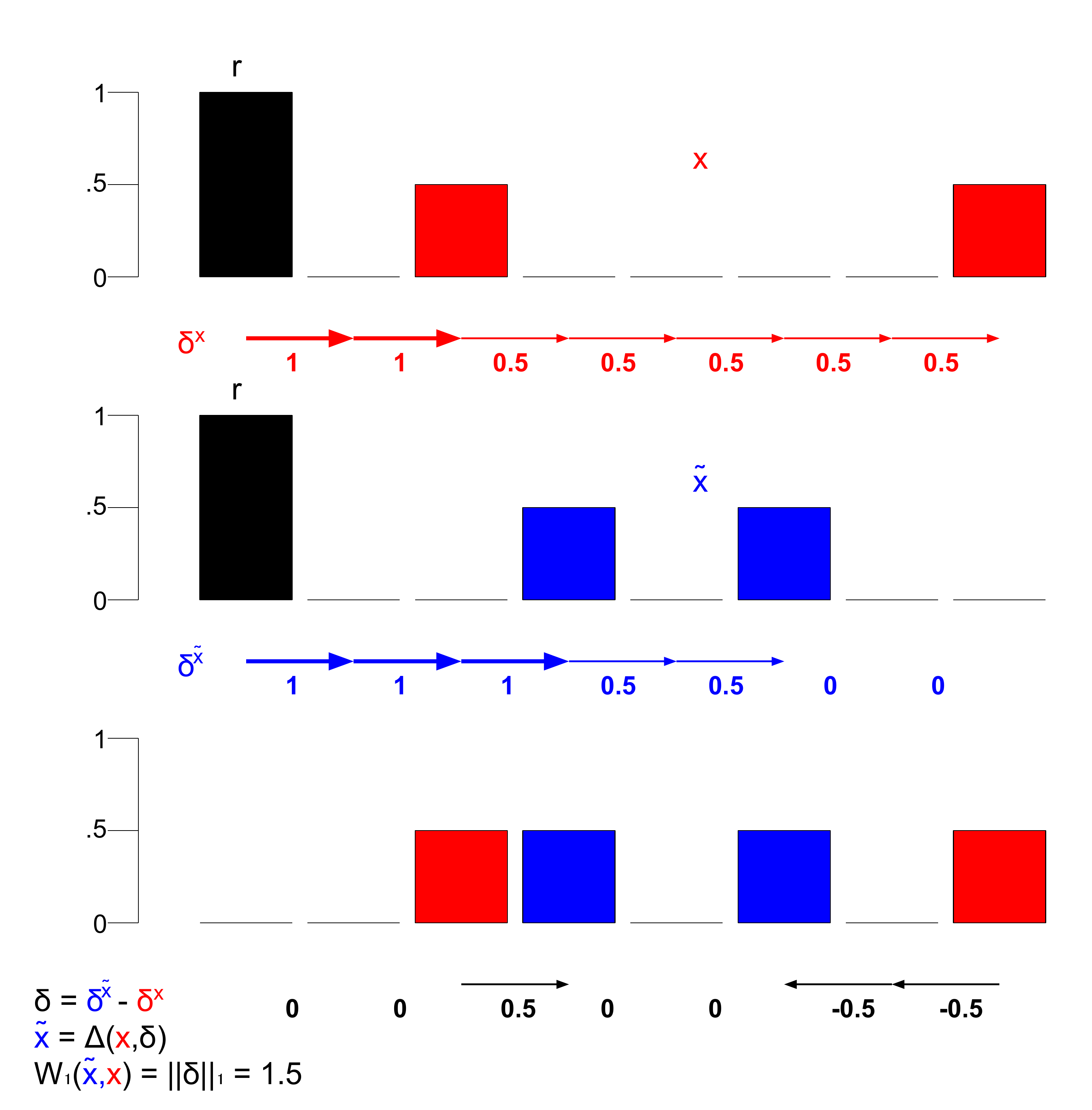}
    \caption{An illustrative example in one dimension. $\vr$ (black) denotes a fixed reference distribution. With this starting distribution fixed, $\vx$ (red) and $\tbx$ (blue) can both be uniquely represented in the flow domain as $\bm{\delta}^{\vx}$ and  $\bm{\delta}^{\tbx}$. Note that the Wasserstein distance between $\vx$ and $\tbx$ is then equivalent to the $L_1$ distance between  $\bm{\delta}^{\vx}$ and  $\bm{\delta}^{\tbx}$. In the one-dimensional case, this shows that we can transform the samples into a space where the Wasserstein threat model is equivalent to the $L_1$ metric. We can then use a pre-existing $L_1$ certified defence in the flow space to defend our classifier. }
    \label{fig:1dex}
\end{figure}
To provide an intuition about the proposed Wasserstein smoothing certified robustness scheme, we consider a simplified model, in which the support of $\vx$ is a one-dimensional array of length $n$, rather than a two-dimensional grid (i.e. $\vx \in \R^n$). In this case, we can denote a local flow plan $\bm{\delta} \in \R^{n-1}$,  so that for $\vx' = \Delta(\vx,\bm{\delta})$:
\begin{equation} \label{flowdef1d}
    \vx'_{i} = \vx_{i} + \bm{\delta}_{i-1} -  \bm{\delta}_{i} 
\end{equation}
where $\bm{\delta}_{0} = \bm{\delta}_{n} = 0$. In this one-dimensional case, for any fixed  $\vx, \vx'$  (with the normalization constraint that  $\sum_i x_{i} = \sum_i x'_{i} = 1$), there is a unique solution $\bm{\delta}$ to $\vx' = \Delta(\vx,\bm{\delta})$:
\begin{equation} \label{flowsol1d}
    \bm{\delta}_{i} =  \sum_{j=1}^i x_{j} - \sum_{j=1}^i x'_{j} 
\end{equation}
Note at this reminds us a well-known identity describing optimal transport between two distributions $X,Y$ which share a continuous, one-dimensional support (see section 2.6 of \cite{peyre2019computational}, for example):
\begin{equation}
    W_1(X,Y) = \int\limits_{-\infty}^{\infty} |F_X(z) - F_Y(z) |dz
\end{equation}
where $F_X, F_Y$ denote cumulative density functions. If we apply this result to our discretized case, with the index $i$ taking the place of $z$, and apply the identity to $\vx$ and $\vx'$, this becomes:
\begin{equation} \label{1dwasseq}
    W_1(\vx,\vx') = \sum_{i=1}^{n} \left|\sum_{j=1}^i x_{j} - \sum_{j=1}^i x'_{j}  \right| =  \sum_{i=1}^{n} \left|\bm{\delta}_i \right| = \|\bm{\delta}\|_1
\end{equation}
By the uniqueness of the solution given in Equation \ref{flowsol1d}, for any $\vx$, we can define $\bm{\delta}^{\vx}$ as the solution to  $\vx = \Delta(\vr,\bm{\delta})$, where $\vr$ is an arbitrary fixed reference distribution (e.g. suppose $r_1 = 1, r_{i} = 0$ for $i\neq 1$).
Therefore, instead of operating on the images $\vx, \tbx \in \R^n$ directly, we can equivalently operate on $\bm{\delta}^{\vx}$ and $\bm{\delta}^{\tbx}$ in the flow domain instead. We will therefore define a flow-domain version of our classifier $\vf$:

\begin{equation}
    \vf^{\text{flow}}(\bm{\delta}) := \vf(\Delta(\vr, \bm{\delta})).
\end{equation}
We will now perform classification entirely in the flow-domain, by first calculating $\bm{\delta}^{\vx}$ and then using $\vf^{\text{flow}}(\bm{\delta}^{\vx})$ as our classifier. Now, consider $\vx$ and an adversarial perturbation $\tbx$, and let $\bm{\delta}$ be the unique solution to $\tbx = \Delta(\vx,\bm{\delta})$. By equation \ref{1dwasseq}, $\|\bm{\delta}\|_1 = W_1(\vx,\tbx)$. Then:
\begin{equation}
    \tbx = \Delta(\vx,\bm{\delta}) =  \Delta(\Delta(\vr, \bm{\delta}^{\vx}),\bm{\delta}) = \Delta(\vr, \bm{\delta}^{\vx} + \bm{\delta})
\end{equation}
where the second equality is by equation \ref{flowadd}. Moreover, by the uniqueness of Equation \ref{flowsol1d}, $\bm{\delta}^{\tbx} = \bm{\delta}^{\vx} + \bm{\delta}$, or $\bm{\delta}^{\tbx} - \bm{\delta}^{\vx} = \bm{\delta}$. Therefore 
\begin{equation}
    \|\bm{\delta}^{\tbx} - \bm{\delta}^{\vx} \|_1 = W_1(\vx,\tbx).
\end{equation}
In other words, if we classify in the flow-domain, using $\vf^{\text{flow}}$, the $L_1$ distance between point $\bm{\delta}^{\vx}, \bm{\delta}^{\tbx}$ is the Wasserstein distance between the distributions $\vx$ and $\tbx$. Then, we can perform smoothing in the flow-domain, and use the existing $L_1$ robustness certificate provided by \cite{lecuyer2018certified}, to certify robustness.
Extending this argument to two-dimensional images adds some complication: images can no longer be represented uniquely in the flow domain, and the relationship between $L_1$ distance and the Wasserstein distance is now an upper bound. Nevertheless, the same conclusion still holds for 2D images as we state in Theorem \ref{smoothwas}. Proofs for the two-dimensional case are given in the appendix.
\begin{table*} [h!]
\centering
\begin{tabular}{|c|c|c|c|}
\hline
Noise & \textbf{Wasserstein Smoothing}&\textbf{Wasserstein Smoothing}&\textbf{Wasserstein Smoothing}\\

standard deviation& Classification accuracy&Median certified& Base Classifier\\
$\sigma$&(Percent abstained)&robustness& Accuracy\\
\hline
0.005&98.71(00.04)&0.0101&97.94\\
0.01&97.98(00.19)&0.0132&94.95\\
0.02&93.99(00.58)&0.0095&79.72\\
0.05&74.22(03.95)&0&43.67\\
0.1&49.41(01.29)&0&30.26\\
0.2&31.80(08.40)&N/A&25.13\\
0.5&22.58(00.84)&N/A&22.67\\
\hline
Noise &\textbf{Laplace Smoothing}&\textbf{Laplace Smoothing}&\textbf{Laplace Smoothing}\\
standard deviation& Classification accuracy&Median certified& Base Classifier\\
$\sigma$&(Percent abstained)&robustness& Accuracy\\
\hline
0.005&98.87(00.06)&0.0062&97.47\\
0.01&97.44(00.19)&0.0053&89.32\\
0.02&91.11(01.29)&0.0030&67.08\\
0.05&61.44(07.45)&0&33.80\\
0.1&34.92(09.36)&N/A&25.56\\
0.2&24.02(05.67)&N/A&22.85\\
0.5&22.57(01.05)&N/A&22.70\\
\hline
\end{tabular}
\caption{Certified Wasserstein Accuracy of Wasserstein and Laplace smoothing on MNIST} \label{MNISTdata}
\end{table*}
\section{Practical Certification Scheme}
To generate probabilistic robustness certificates from randomly sampled evaluations of the base classifier $\vf$, we adapt the procedure outlined by \cite{cohen2019certified} for $L_2$ certificates. We consider a \textit{hard smoothed classifier} approach: we set $\vf_j(\bx) = 1$ if the base classifier selects class $j$ at point $\bx$, and $\vf_j(\bx) = 0$ otherwise. We also use a stricter form of the condition given as Equation \ref{thm1cond}:
\begin{equation}
    \barbf_{i}(\vx) \geq e^{2\sqrt{2}\rho / \sigma} (1- \barbf_{i}(\vx) )
\end{equation}
This means that we only need to provide a probabilistic lower bound of the expectation of the largest class score, rather than bounding every class score. This reduces the number of samples necessary to estimate a high-confidence lower bound on $\barbf_{i}(\vx)$, and therefore to estimate the certificate with high confidence. \cite{cohen2019certified} provides a statistically sound procedure for this, which we use: refer to that paper for details. Note that, when simply evaluating the classification given by $\barbf(\bx)$, we will also need to approximate $\barbf$ using random samples. \cite{cohen2019certified} also provides a method to do this which yields the expected classification with high confidence, but may abstain from classifying. We will also use this method when evaluating accuracies.

Since the Wasserstein adversarial attack introduced by \cite{pmlr-v97-wong19a} uses the $L_2$ distance metric, to have a fair performance evaluation against this attack, we are interested in certifying a radius in the 1-Wasserstein distance with underlying $L_2$ distance metric, rather than $L_1$. Let us denote this radius as $\rho_{2}$. In two-dimensional images, the elements of the cost matrix $C$ in this metric may be smaller by up to a factor of $\sqrt{2}$, so we have:
\begin{equation}
    \rho_2 \geq \frac{1}{\sqrt{2} }\rho 
\end{equation}
Therefore, by certifying to a radius of $\rho = \sqrt{2}\rho_2$, we can effectively certify against the $L_2$ metric 1-Wasserstein attacks of radius $\rho_2$; our condition then becomes:
\begin{equation}
    \barbf_{i}(\vx) \geq e^{4\rho_2 / \sigma} (1- \barbf_{i}(\vx) ).
\end{equation}
\begin{figure*}[h!]
\begin{minipage}[b]{0.5\textwidth}
    \includegraphics[width=\textwidth]{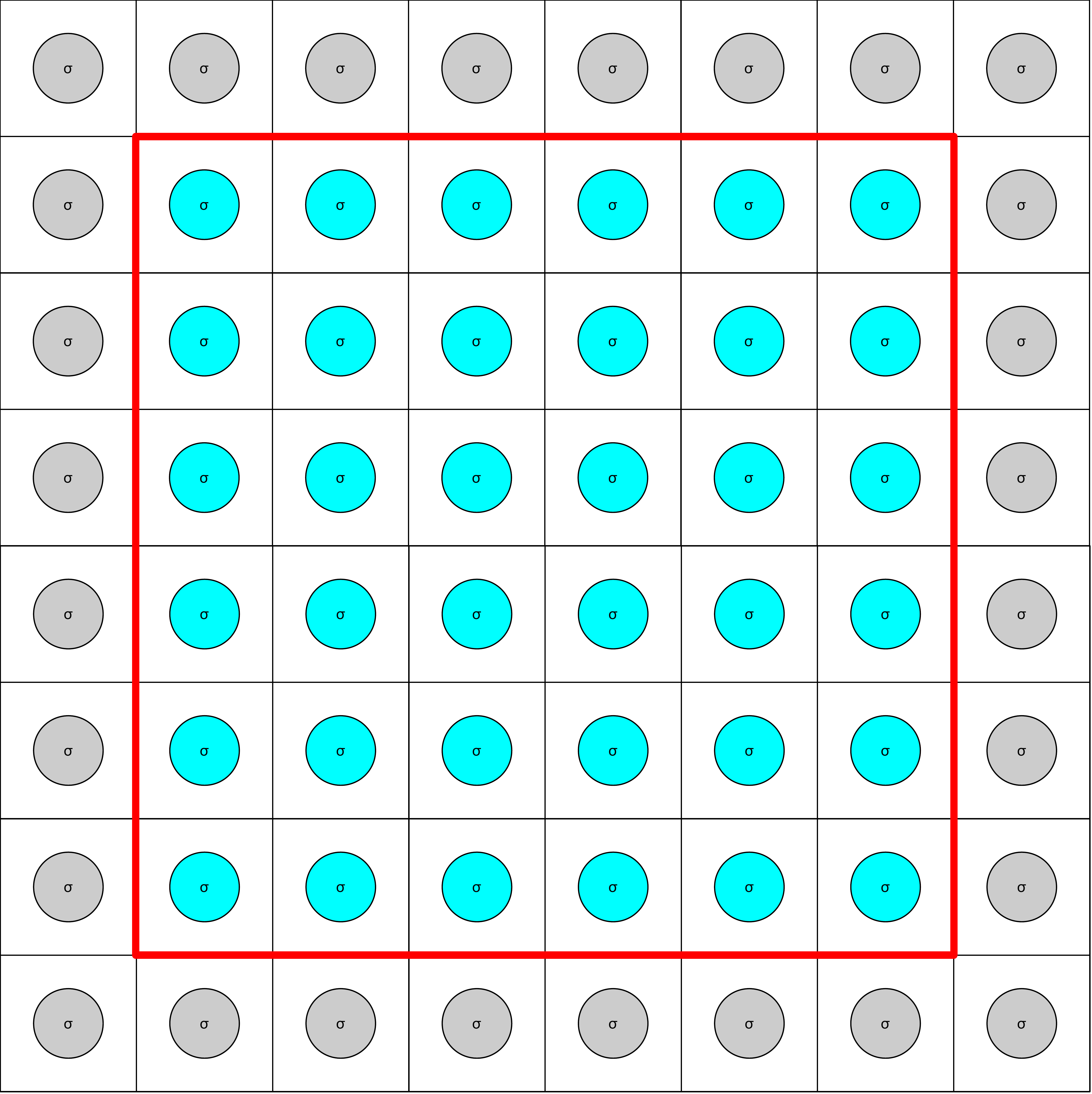}
    (a) Laplace Smoothing
\end{minipage}
\begin{minipage}[b]{0.5\textwidth}
    \includegraphics[width=\textwidth]{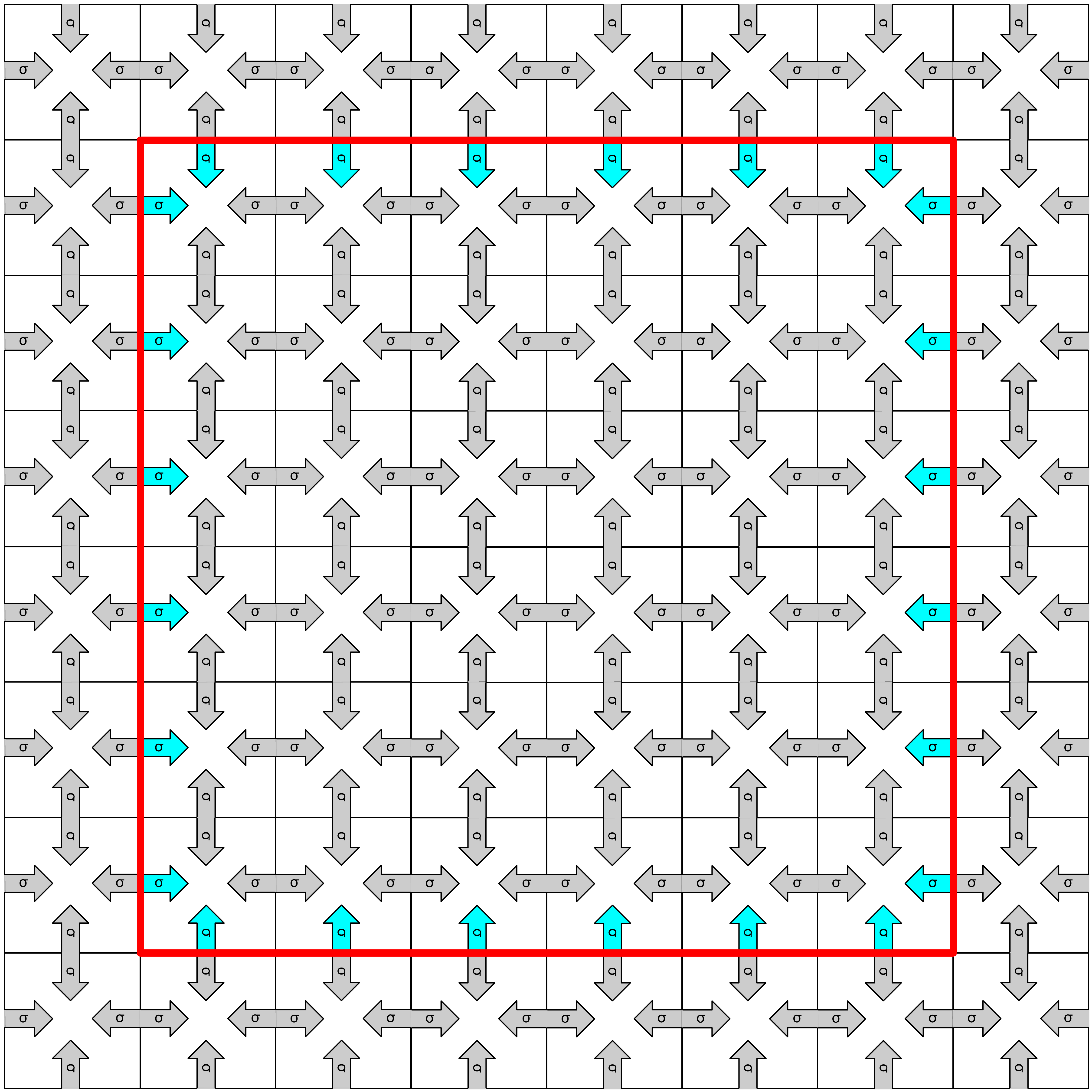}
    (b) Wasserstein Smoothing
\end{minipage} \caption{Schematic diagram showing the difference between Laplace and Wasserstein smoothing on the variance of the aggregate pixel intensity in a square region, outlined in red. See the text of Section \ref{sec:cmplaplace}. In both figures, pixels are represented as square tiles. In (a), noise on individual pixels is represented with circles, which are gray if they do \textit{not} contribute to the overall pixel intensity in the outlined region, but are cyan if they \textit{do} contribute. We see that the noise is proportional (in variance) to the area of the region. In (b), under Wasserstein smoothing, noise is represented by arrows between pixels which exchange intensity. Again, these are gray if they do not contribute to the overall pixel intensity in the outlined region, and cyan if they do contribute. Note that arrows in the interior do not contribute to the aggregate intensity, because equal values are added and subtracted from adjacent pixels. The noise is proportional (in variance) to the perimeter of the region. This provides a plausible intuition as to why base classifiers, when given noisy images, classify with higher accuracy on Wasserstein smoothed images compared to Laplace smoothed images, as seen empirically in Table \ref{MNISTdata}.} \label{fig:lapvwassexplain}
\end{figure*}
 \section{Experimental Results}
 In all experiments, we use 10,000 random noised samples to predict the smoothed classification of each image; to generate certificates, we first use 1000 samples to infer which class has highest smoothed score, and then 10,000 samples to lower-bound this score. All probabilistic certificates and classifications are reported to $95\%$ confidence.  The  model architectures used for the base classifiers for each data set are the same as used in \cite{pmlr-v97-wong19a}. When reporting results, \textit{median certified accuracy} refers to the maximum radius $\rho_2$ such that at least $50\%$ of classifications for images in the data set are certified to be robust to at least this radius, and these certificates are for the correct ground truth class. If over $50\%$ of images are not certified for the correct class, this statistic is reported as  $N/A$.
 \subsection{Comparison to naive Laplace Smoothing} \label{sec:cmplaplace}
 Note that one can derive a trivial but sometimes tight bound, that, under any $L_p$ distance metric, if $W_1(\vx,\tbx) \leq \rho/2$, then $\|\vx-\tbx\|_1 \leq \rho$.  (See Corollary \ref{laplace-corol} in the appendix.)  This enables us to write a condition for $\rho_2$-radius Wasserstein certified robustness by applying Laplace smoothing directly, and simply converting the certificate. In our notation, this condition is:
 \begin{equation}
    \barbf_{i}^{\text{Laplace}}(\vx) \geq e^{4\sqrt{2}\rho_2 / \sigma} (1- \barbf^{\text{Laplace}}_{i}(\vx) )
\end{equation}
where $\barbf^{\text{Laplace}}(\vx) $ is a smoothed classifier with Laplace noise added to every pixel independently. It may appear as if our Wasserstein-smoothed bound should only be an improvement over this bound by a factor of $\sqrt{2}$ in the certified radius $\rho_2$. However, as shown in Table \ref{MNISTdata}, we in fact improve our certificates by a larger factor. This is because, for a fixed noise standard deviation, the base classifier is able to achieve a higher accuracy after adding noise in the flow-domain, compared to adding noise directly to the pixels. When adding noise in the flow-domain, we add and subtract noise in equal amounts between adjacent pixels, preserving more information for the base classifier. 

To give a concrete example, consider some $k\times k$ square patch of an image. Suppose that the overall aggregate pixel intensity in this patch (i.e. the sum of the pixel values) is a salient feature for classification (This is a highly plausible situation: for example, in MNIST, this may indicate whether or not some region of an image is occupied by part of a digit.) Let us call this feature $\mu$, and calculate the variance of $\mu$ in smoothing samples under Laplace and Wasserstein smoothing, both with variance $\sigma^2$. Under Laplace smoothing (Figure \ref{fig:lapvwassexplain}-a), $k^2$ independent instances of Laplace noise are added to $\mu$, so the resulting variance will be $k^2\sigma^2$: this is proportional to the area of the region. In the case of Wasserstein smoothing, by contrast, probability mass exchanged between between pixels in the interior of the patch has no effect on the aggregate quantity $\mu$. Instead, only noise on the perimeter will affect the total feature value $\mu$: the variance is therefore $4k\sigma^2$ (Figure \ref{fig:lapvwassexplain}-b). Wasserstein smoothing then reduces the effective noise variance on the feature $\mu$ by a factor of $O(k)$. 

\subsection{Empirical adversarial accuracy} \label{sec:emp-acc}
\begin{figure}[t]
    \centering
    \includegraphics[width=0.48\textwidth,trim={0.8cm 0 1.7cm 1cm},clip]{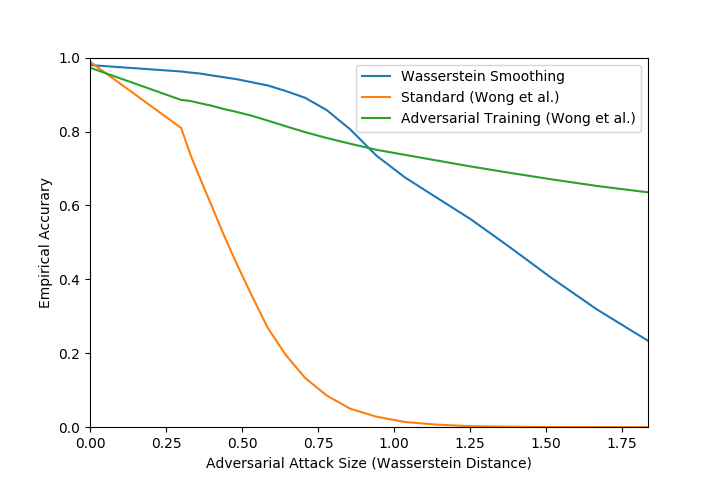}
    \caption{Comparison of empirical robustness on MNIST to models from \citep{pmlr-v97-wong19a}.  Wasserstein smoothing with $\sigma = 0.01$ (This is the amount of  noise which maximizes certified robustness, as seen in Table \ref{MNISTdata}.)  }
    \label{fig:mnist_emp}
\end{figure}
We measure the performance of our smoothed classifier against the Wasserstein-metric adversarial attack proposed in \cite{pmlr-v97-wong19a}, and compare to models tested in that work. Results are presented in Figure \ref{fig:mnist_emp}. For testing, we use the same attack parameters as in \cite{pmlr-v97-wong19a}: the 'Standard" and 'Adversarial Training' results are therefore replications of the experiments from that paper, using the publicly available code and pretrained models.

In order to attack our hard smoothed classifier, we adapt the method proposed by \cite{salman2019provably}: in particular, note that we cannot directly calculate the gradient of the classification loss with respect to the image for a \textit{hard} smoothed classifier, because the derivatives of the logits of the base classifier are not propagated. Therefore, we must instead attack a \textit{soft} smooth classifier: we take the expectation over samples of the \textit{softmaxed} logits of the base classifier, instead of the final classification output. In each step of the attack, we use 128 noised samples to estimate this gradient, as used in \cite{salman2019provably}.

In the attack proposed by \cite{pmlr-v97-wong19a}, the images are attacked over 200 iterations of projected gradient descent, projected onto a Wasserstein ball, with the radius of the ball every 10 iterations. The attack succeeds, and the final radius is recorded, once the classifier misclassifies the image. In order to preserve as much of the structure (and code) of the attack as possible to provide a fair comparison, it is thus necessary for us to evaluate each image using our hard classifier, with the full 10,000 smoothing samples, at each iteration of the attack. We  count the classifier abstaining as a misclassification for these experiments. However, note that this may somewhat underestimate the true robustness of our classifier: recall that our classifier is nondeterministic; therefore, because we are repeatedly evaluating the classifier and reporting a perturbed image as adversarial the first time it is missclassified, we may tend to over-count misclassifications. However, because we are using a large number of noise samples to generate our classifications, this is only likely to happen with examples which are close to being adversarial. Still, the presented data should be regarded as a lower bound on the true accuracy under attack of our Wasserstein smoothed classifier.
\begin{table*} [h!]
\centering
\begin{tabular}{|c|c|c|c|}
\hline
Noise standard deviation& Classification accuracy&Median certified& Base Classifier \\
$\sigma$&(Percent abstained)&robustness& Accuracy\\
\hline
0.00005&87.01(00.24)&0.000101&86.02\\
0.0001&83.39(00.42)&0.000179&82.08\\
0.0002&77.57(00.66)&0.000223&75.46\\
0.0005&68.75(01.01)&0.000209&65.12\\
0.001&61.65(01.77)&0.000127&57.03\\
\hline
\end{tabular}
\caption{Certified Wasserstein Accuracy of Wasserstein smoothing on CIFAR10} \label{CIFARdata}
\end{table*}

In Figure \ref{fig:mnist_emp}, we note two things: first, our Wasserstein smoothing technique appears to be an effective empirical defence against Wasserstein adversarial attacks, compared to an unprotected ('Standard')  network. (It is also more robust than the binarized and $L_{\infty}$-robust models tested by \cite{pmlr-v97-wong19a}: see appendix.) However, for large perturbations, our defence is less effective than the adversarial training defence proposed by \cite{pmlr-v97-wong19a}. This suggests a promising direction for future work: \cite{salman2019provably} proposed an adversarial training method for smoothed classifiers, which could be applied in this case. Note however that both Wasserstein adversarial attacks and smoothed adversarial training are computationally expensive, so this may require  significant computational resources. \\
Second, the median radius of attack to which our smoothed classifier is empirically robust is larger than the median certified robustness of our smoothed classifier by two orders of magnitude. This calls for future work both to develop improved robustness certificates as well as to develop more effective attacks in the Wasserstein metric.

\subsection{Experiments on color images (CIFAR-10)} \label{sec:color}
\begin{figure}[t]
    \centering
    \includegraphics[width=0.48\textwidth,trim={0.8cm 0 1.7cm 1cm},clip]{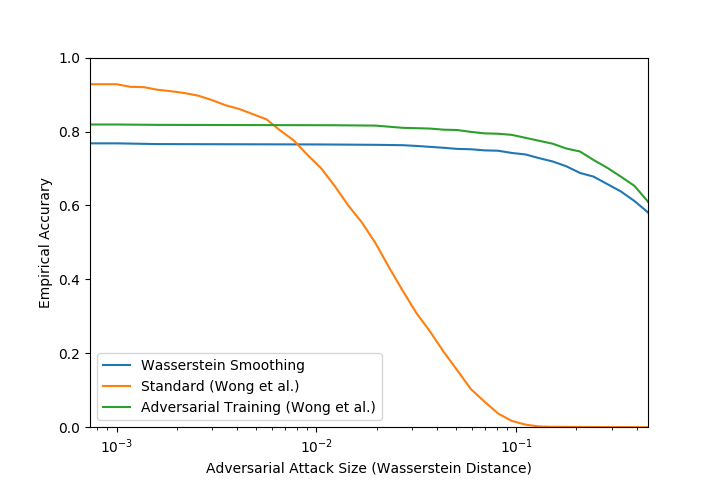}
    \caption{Comparison of empirical robustness on CIFAR-10 to models from \citep{pmlr-v97-wong19a}. Wasserstein smoothing is with $\sigma = 0.0002$. (This is the amount of noise which maximizes certified robustness, as seen in Table \ref{CIFARdata}.) Note that we test on a random sample of 1000 images from CIFAR-10, rather than the entire data set. }
    \label{fig:cifar_emp}
\end{figure}
\cite{pmlr-v97-wong19a} also apply their attack to color images in CIFAR-10. In this case, the attack does not transport probability mass between color channels: therefore, in our defence, it is sufficient to add noise in the flow domain to each channel independently to certify robustness (See Corollary \ref{threeD} in the appendix for a proof of the validity of this method). Certificates are presented in Table \ref{CIFARdata}, while empirical robustness is as Figure \ref{fig:cifar_emp}. Again, we compare directly to models from \cite{pmlr-v97-wong19a}. We note that again, empirically, our model significantly outperforms an unprotected model, but is not as robust as a model trained adversarially. We also note that the certified robustness is orders of magnitude smaller than computed for MNIST: however, the unprotected model is also significantly less robust empirically than the equivalent MNIST model.
\section{Conclusion}
In this paper, we developed a smoothing-based certifiably robust defence for Wasserstein-metric adversarial examples. To do this, we add noise in the space of possible flows of pixel intensity between images.  To our knowledge, this is the first certified defence method specifically tailored to the Wasserstein threat model. Our method proves to be an effective practical defence against Wasserstein adversarial attacks, with significantly improved empirical adversarial robustness compared to a baseline model. 
\bibliographystyle{humannat} 
\nocite{*}
\bibliography{ref}
\clearpage
\appendix 
\section{Proofs}
\begin{lemmarestate}
For any normalized probability distributions $\vx, \vx' \in[0,1]^{n \times m}$, there exists at least one $\bm{\delta}$ such that $\vx' =  \Delta(\vx,\bm{\delta}) $. Furthermore:
\begin{equation} 
    \min_{ \bm{\delta}: \,\, \vx' =  \Delta(\vx,\bm{\delta}) } \|\bm{\delta}\|_1 = W_{1}(\vx,\vx') \label{flowminrestate}
\end{equation}
Where $W_1$ denotes the 1-Wasserstein metric, using $L_1$ distance as the underlying distance metric.
\end{lemmarestate}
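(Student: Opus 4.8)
The plan is to recognize the program on the right of \eqref{flowminrestate} as nothing but the Ling--Okada linear program of Equation~\ref{wassl1lp} in which the two directional flow variables $\bg_{(i,j),(i',j')}$ and $\bg_{(i',j'),(i,j)}$ living on each grid edge have been collapsed into a single \emph{signed net flow}, and then to quote the identity ``$W_1(\vx,\vx')=$ value of Equation~\ref{wassl1lp}'' that \cite{ling2007efficient} establish. Concretely, I identify the coordinates of $\bm{\delta}$ with the (up to) $2nm$ edges of the grid graph on $[n]\times[m]$ --- each $\bm{\delta}^{\text{vert.}}_{i,j}$ an edge between $(i,j)$ and $(i+1,j)$, each $\bm{\delta}^{\text{horiz.}}_{i,j}$ an edge between $(i,j)$ and $(i,j+1)$, with a fixed orientation chosen so that Definition~\ref{flowdef} reads exactly ``$\Delta(\vx,\bm{\delta})_{i,j}-\vx_{i,j}=$ signed divergence of $\bm{\delta}$ at pixel $(i,j)$'', the boundary conventions $\bm{\delta}^{\text{vert.}}_{0,j}=\bm{\delta}^{\text{vert.}}_{n,j}=\bm{\delta}^{\text{horiz.}}_{i,0}=\bm{\delta}^{\text{horiz.}}_{i,m}=0$ saying simply that no edges leave the grid. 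For \emph{existence}, the map $\bm{\delta}\mapsto\Delta(\vx,\bm{\delta})-\vx$ is the incidence (divergence) operator of this graph, whose image --- the grid being connected --- is exactly the hyperplane of arrays with zero coordinate sum; since $\sum_{i,j}(\vx'_{i,j}-\vx_{i,j})=1-1=0$, some $\bm{\delta}$ satisfies $\vx'=\Delta(\vx,\bm{\delta})$, so the feasible set in \eqref{flowminrestate} is nonempty and the minimum of $\|\cdot\|_1$ over this affine set is attained (equivalently, feasibility of Equation~\ref{wassl1lp} --- immediate from that of Equation~\ref{wassdef}, where one may take $\Pi=\vx(\vx')^\top$ --- transfers via the correspondence below).

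For the equality I prove both inequalities through this correspondence. $(\le)$: take an optimal $\bg\ge 0$ for Equation~\ref{wassl1lp} and set $\bm{\delta}_e$ equal to the net flow $\bg_{\mathrm{fwd}}-\bg_{\mathrm{bwd}}$ on each edge $e$ (with $\mathrm{fwd}$ the chosen orientation); then the divergence constraint of Equation~\ref{wassl1lp} becomes exactly $\vx'=\Delta(\vx,\bm{\delta})$, and $\|\bm{\delta}\|_1=\sum_e|\bg_{\mathrm{fwd}}-\bg_{\mathrm{bwd}}|\le\sum_e(\bg_{\mathrm{fwd}}+\bg_{\mathrm{bwd}})=W_1(\vx,\vx')$, so $\min_{\bm{\delta}}\|\bm{\delta}\|_1\le W_1(\vx,\vx')$. $(\ge)$: given any feasible $\bm{\delta}$, put $\bg_{\mathrm{fwd},e}=\max(\bm{\delta}_e,0)$ and $\bg_{\mathrm{bwd},e}=\max(-\bm{\delta}_e,0)$; then $\bg\ge 0$, the net flow on each edge is still $\bm{\delta}_e$ so $\bg$ satisfies the divergence constraint of Equation~\ref{wassl1lp}, and its objective equals $\sum_e(\bg_{\mathrm{fwd},e}+\bg_{\mathrm{bwd},e})=\sum_e|\bm{\delta}_e|=\|\bm{\delta}\|_1$, whence $W_1(\vx,\vx')\le\|\bm{\delta}\|_1$ for every feasible $\bm{\delta}$, i.e. $W_1(\vx,\vx')\le\min_{\bm{\delta}}\|\bm{\delta}\|_1$. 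Combining the two bounds gives \eqref{flowminrestate}.

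The only genuinely delicate point --- the ``main obstacle'' such as it is --- is the sign/orientation bookkeeping: one must fix the orientation of the vertical and horizontal edges so that the pixelwise formula in Definition~\ref{flowdef} coincides with the net-flow constraint of Equation~\ref{wassl1lp}, including at boundary pixels, where the vanishing boundary components of $\bm{\delta}$ must correspond to the absence of edges leaving the grid. Once that is pinned down, everything else is the elementary fact that among all ways of writing a fixed real number as a difference $a-b$ of nonnegatives, the sum $a+b$ is minimized --- with value $|a-b|$ --- at the ``non-cancelling'' choice, i.e. replacing a pair of opposing flows on an edge by their net never increases the $L_1$ cost and is always reversible.
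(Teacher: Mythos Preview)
Your proposal is correct and is essentially the same proof as the paper's: both directions use exactly the transformations $\bm{\delta}_e=\bg_{\mathrm{fwd}}-\bg_{\mathrm{bwd}}$ and $\bg_{\mathrm{fwd}}=\max(\bm{\delta}_e,0),\ \bg_{\mathrm{bwd}}=\max(-\bm{\delta}_e,0)$ to pass between Equation~\ref{wassl1lp} and Equation~\ref{flowminrestate}, with the triangle inequality giving $(\le)$ and $|\,\cdot\,|=\max(\cdot,0)+\max(-\cdot,0)$ giving $(\ge)$. The only cosmetic differences are that the paper writes out the index bookkeeping explicitly rather than in your edge/divergence language, and for existence the paper routes through feasibility of the general transport LP (via $\Pi=\vx(\vx')^\top$) and the Ling--Okada reduction, whereas you additionally invoke the linear-algebra fact that the incidence map of a connected graph surjects onto the zero-sum hyperplane; both arguments are valid and yield the same conclusion.
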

\begin{proof}
We first show the equivalence of the above minimization problem with the linear program proposed by \cite{ling2007efficient}, restated here:
\begin{align} \label{wassl1lprepeat}
    W_1(\vx,\vx') =\min_{\bg} \sum _{\substack{(i,j)}}~ \sum _{\substack{(i',j')\in \cN\left((i,j)\right)}} \bg_{(i,j),(i',j')}
\end{align}    
where $\bg \geq 0$ and $\forall (i,j)$, 
\begin{align}  
    \sum _{\substack{(i',j')\in \cN\left((i,j)\right)}}  \bg_{(i,j),(i',j')} - \bg_{(i',j'),(i,j)} = \bx'_{i,j} -\bx_{i,j}\nonumber
\end{align}
It suffices to show that (1) there is a transformation from the variables $\bg$ in Equation \ref{wassl1lprepeat} to the variables $\bm{\delta}$ in Equation \ref{flowminrestate}, such that  all points which are feasible in Equation \ref{wassl1lprepeat} are feasible in \ref{flowminrestate} and the minimization objective in Equation \ref{flowminrestate} is less than or equal to the minimization objective in Equation \ref{wassl1lprepeat}, and (2) there is a transformation from the variables $\bm{\delta}$ in Equation \ref{flowminrestate} to the variables $\bg$ in Equation \ref{wassl1lprepeat}, such that all points which are feasible in Equation \ref{flowminrestate} are feasible in Equation \ref{wassl1lprepeat} and the minimization objective in Equation \ref{wassl1lprepeat} is less than or equal to the minimization objective in Equation \ref{flowminrestate}.\\
We start with (1). We give the transformation as:
\begin{equation} \label{wasstransform1}
\begin{split}
    \bm{\delta}^{\text{vert.}}_{i, j} := \bg_{(i,j),(i+1,j)} -  \bg_{(i+1,j),(i,j)}\\
    \bm{\delta}^{\text{horiz.}}_{i, j} := \bg_{(i,j),(i,j+1)} -  \bg_{(i,j+1),(i,j)} 
\end{split}
\end{equation}
Where we let $\bg_{(n,j),(n+1,j)} = \bg_{(n+1,j),(n,j)} =  \bg_{(i,m+1),(i,m)}=  \bg_{(i,m),(i,m+1)} = 0 $. To show feasibility, we write out fully the flow constraint of Equation \ref{wassl1lprepeat}:\\
\begin{equation}
\begin{split}
      \bg_{(i,j),(i+1,j)} -  \bg_{(i+1,j),(i,j)} +&\\
      \bg_{(i,j),(i-1,j)} -  \bg_{(i-1,j),(i,j)} +&\\
      \bg_{(i,j),(i,j+1)} -  \bg_{(i,j+1),(i,j)}  +& \\
      \bg_{(i,j),(i,j-1)} -  \bg_{(i,j-1),(i,j)}   =& \bx'_{i,j} -\bx_{i,j}
\end{split}
\end{equation}
Substituting in Equation \ref{wasstransform1}:
\begin{equation}
\begin{split}
     \bm{\delta}^{\text{vert.}}_{i, j} +
     -\bm{\delta}^{\text{vert.}}_{i-1, j} +
      \bm{\delta}^{\text{horiz.}}_{i, j}  +
     -\bm{\delta}^{\text{horiz.}}_{i, j-1}   = \bx'_{i,j} -\bx_{i,j}
\end{split}
\end{equation}
But by Definition \ref{flowdef}, this is exactly:
\begin{equation}
   \Delta(\vx,\bm{\delta})_{i,j} = \vx'_{i,j}  
\end{equation}
Which is the sole constraint in Equation \ref{flowminrestate}: then any solution which is feasible in Equation \ref{wassl1lprepeat} is feasible in Equation \ref{flowminrestate}. Also note that:
\begin{equation}
\begin{split}
 \|\bm{\delta}\|_1= &\sum_{i,j} |\bm{\delta}_{i,j}^{\text{vert.}}| +  |\bm{\delta}_{i,j}^{\text{horiz.}}|\\
  \leq& \sum_{i,j}  |\bg_{(i,j),(i+1,j)}| + |\bg_{(i+1,j),(i,j)}|\\
   &+   |\bg_{(i,j),(i,j+1)}| + |\bg_{(i,j+1),(i,j)}|\\
   =& \sum_{i,j}  \bg_{(i,j),(i+1,j)} + \bg_{(i+1,j),(i,j)} \\
    &+   \bg_{(i,j),(i,j+1)} + \bg_{(i,j+1),(i,j)}\\
   =& \sum_{i,j}  \bg_{(i,j),(i+1,j)} +   \bg_{(i,j),(i,j+1)} \\
   &+\sum_{i,j} \bg_{(i+1,j),(i,j)} + \bg_{(i,j+1),(i,j)} \\
  =& \sum_{i,j}  \bg_{(i,j),(i+1,j)} +   \bg_{(i,j),(i,j+1)} \\
   &+\sum_{i,j} \bg_{(i,j),(i-1,j)} + \bg_{(i,j),(i,j-1)} \\
  = &\sum _{\substack{(i,j)}}~ \sum _{\substack{(i',j')\in \cN\left((i,j)\right)}} \bg_{(i,j),(i',j')}
\end{split}
\end{equation}
Where the inequality follows from triangle inequality applied to Equation \ref{wasstransform1}, and in the second sum in the fourth line, we exploit the fact that $\bg_{(n,j),(n+1,j)} = \bg_{(n+1,j),(n,j)} =  \bg_{(i,m+1),(i,m)}=  \bg_{(i,m),(i,m+1)} = 0 $ to shift indices. This shows that the minimization objective in Equation \ref{flowminrestate} is less than or equal to the minimization objective in Equation \ref{wassl1lprepeat}.\\
Moving on to (2), we give the transformation as:
\begin{equation} \label{wasstransform2}
\begin{split}
   \bg_{(i,j),(i+1,j)}& := \max( \bm{\delta}^{\text{vert.}}_{i, j}, 0)\\
   \bg_{(i,j),(i-1,j)}& := \max( -\bm{\delta}^{\text{vert.}}_{i-1, j}, 0)\\
   \bg_{(i,j),(i,j+1)}& := \max( \bm{\delta}^{\text{horiz.}}_{i, j}, 0)\\
   \bg_{(i,j),(i,j-1)}& := \max( -\bm{\delta}^{\text{horiz.}}_{i, j-1}, 0)\\
\end{split}
\end{equation}
Note that the non-negativity constraint of Equation \ref{wassl1lprepeat} is automatically satisfied by the form of these definitions. Shifting indices, we also have:
\begin{equation}
\begin{split}
   \bg_{(i-1,j),(i,j)}& = \max( \bm{\delta}^{\text{vert.}}_{i-1, j}, 0)\\
   \bg_{(i+1,j),(i,j)}& = \max( -\bm{\delta}^{\text{vert.}}_{i, j}, 0)\\
   \bg_{(i,j-1),(i,j)}& = \max( \bm{\delta}^{\text{horiz.}}_{i, j-1}, 0)\\
   \bg_{(i,j+1),(i,j)}& = \max( -\bm{\delta}^{\text{horiz.}}_{i, j}, 0)\\
\end{split}
\end{equation}
From the constraint on Equation \ref{flowminrestate}, we have:
\begin{equation}
\begin{split}
     \bx'_{i,j} -\bx_{i,j} = & \bm{\delta}^{\text{vert.}}_{i, j} +\\
    & -\bm{\delta}^{\text{vert.}}_{i-1, j} +\\
    &  \bm{\delta}^{\text{horiz.}}_{i, j}  +\\
     &-\bm{\delta}^{\text{horiz.}}_{i, j-1}   \\
     =& \max( \bm{\delta}^{\text{vert.}}_{i, j}, 0) -  \max( -\bm{\delta}^{\text{vert.}}_{i, j}, 0) +\\
      & \max( -\bm{\delta}^{\text{vert.}}_{i-1, j} -\max( \bm{\delta}^{\text{vert.}}_{i-1, j}, 0)  , 0)+\\
& \max( \bm{\delta}^{\text{horiz.}}_{i, j}, 0) -  \max( -\bm{\delta}^{\text{horiz.}}_{i, j}, 0) +\\
      & \max( -\bm{\delta}^{\text{horiz.}}_{i, j-1}, 0) -\max( \bm{\delta}^{\text{horiz.}}_{i, j-1}, 0) \\
      =& \bg_{(i,j),(i+1,j)} - \bg_{(i+1,j),(i,j)}+\\
       & \bg_{(i,j),(i-1,j)}  -\bg_{(i-1,j),(i,j)}+\\
        &\bg_{(i,j),(i,j+1)} - \bg_{(i,j+1),(i,j)}+\\
        &\bg_{(i,j),(i,j-1)}  -\bg_{(i,j-1),(i,j)}\\
\end{split}
\end{equation}
Which is exactly the second constraint of Equation \ref{wassl1lprepeat}: then any solution which is feasible in Equation \ref{wassl1lprepeat} is feasible in Equation \ref{flowminrestate}. Also note that:
\begin{equation}
    \begin{split}
&\sum _{\substack{(i,j)}}~ \sum _{\substack{(i',j')\in \cN\left((i,j)\right)}} \bg_{(i,j),(i',j')} \\
  =& \sum_{i,j}  \bg_{(i,j),(i+1,j)} +   \bg_{(i,j),(i,j+1)} \\
   &+\sum_{i,j} \bg_{(i,j),(i-1,j)} + \bg_{(i,j),(i,j-1)} \\
  =& \sum_{i,j} \max( \bm{\delta}^{\text{vert.}}_{i, j}, 0)+   \max( \bm{\delta}^{\text{horiz.}}_{i, j}, 0)\\
   &+\sum_{i,j}\max( -\bm{\delta}^{\text{vert.}}_{i-1, j}, 0)+ \max( -\bm{\delta}^{\text{horiz.}}_{i, j-1}, 0) \\ 
   =& \sum_{i,j} \max( \bm{\delta}^{\text{vert.}}_{i, j}, 0)+\max( -\bm{\delta}^{\text{vert.}}_{i, j}, 0)\\
   &+\sum_{i,j}  \max( \bm{\delta}^{\text{horiz.}}_{i, j}, 0)+  \max( -\bm{\delta}^{\text{horiz.}}_{i, j}, 0)\\
   =& \sum_{i,j} |\bm{\delta}^{\text{vert.}}_{i, j}| +  |\bm{\delta}^{\text{horiz.}}_{i, j}|\\
   =&  \|\bm{\delta}\|_1\\
    \end{split}
\end{equation}
Where we again exploit the fact that $\bg_{(n,j),(n+1,j)} = \bg_{(n+1,j),(n,j)} =  \bg_{(i,m+1),(i,m)}=  \bg_{(i,m),(i,m+1)} = 0 $ to shift indices, in the fourth line. This shows that the minimization objective in Equation \ref{wassl1lprepeat} is less than or equal to the minimization objective in Equation \ref{flowminrestate}, completing (2).\\
Finally, now that we have shown that Equations \ref{flowminrestate} and \ref{wassl1lprepeat} are in fact equivalent minimizations (i.e., we have proven Equation \ref{flowminrestate} correct), we would like to show that there is always a feasible solution to  \ref{flowminrestate}, as claimed. By the above transformations, it suffices to show that there is always a feasible solution to Equation \ref{wassl1lprepeat}. \cite{ling2007efficient} show that any feasible solution the the general Wasserstein minimization LP (Definition \ref{wassdef}) can be transformed into a solution to Equation \ref{wassl1lprepeat}, so it suffices to show that the LP in Definition \ref{wassdef} always has a feasible solution. This is trivially satisfied by taking $\Pi=\vx(\vx')^T$, where we note that $\vx$, a probability distribution, is non-negative. 
\end{proof}
\begin{theoremrestate} 
Consider a normalized probability distribution $\vx \in [0,1]^{n \times m}$, and a classification score function $\vf:  \R^{n \times m} \rightarrow [0,1]^k $. Let $\barbf$ refer to the Wasserstein-smoothed classification function:
\begin{equation}
   \barbf(\vx) = \underset{\bm{\delta} \sim \mathcal{L}(\sigma) }{\E}\left[ \vf (\Delta(\vx, \bm{\delta}))\right]
\end{equation}
Let $i$ be the class assignment of $\vx$ using the smoothed classifier $\barbf$ (i.e. $i = \arg\max_{i'} \barbf_{i'}(\vx)$). If
\begin{equation}
    \barbf_{i}(\vx) \geq e^{2\sqrt{2}\rho / \sigma} \max_{i' \neq i} \barbf_{i'}(\vx)
\end{equation}
Then for any perturbed probability distribution $\tbx$ such that $W_1(\vx,\tbx) \leq \rho$:
\begin{equation}
    \barbf_{i}(\tbx) \geq \max_{i' \neq i} \barbf_{i'}(\tbx)
\end{equation}
\end{theoremrestate}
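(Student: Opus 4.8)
The plan is to reduce a Wasserstein-bounded perturbation to an $L_1$-bounded additive shift in the flow domain, and then run the Laplace-mechanism likelihood-ratio argument of \cite{lecuyer2018certified}. First I would invoke Lemma~\ref{wassflow}: since $W_1(\vx,\tbx)\le\rho$, there is a local flow plan $\bm{\delta}^{*}$ with $\tbx=\Delta(\vx,\bm{\delta}^{*})$ and $\|\bm{\delta}^{*}\|_1 = W_1(\vx,\tbx)\le\rho$. Combining this with additivity of flows (Equation~\ref{flowadd}), for every $\bm{\delta}$ we have $\Delta(\tbx,\bm{\delta}) = \Delta(\Delta(\vx,\bm{\delta}^{*}),\bm{\delta}) = \Delta(\vx,\bm{\delta}+\bm{\delta}^{*})$, so that for every class $j$,
\[
\barbf_j(\tbx) \;=\; \underset{\bm{\delta}\sim\mathcal{L}(\sigma)}{\E}\big[\vf_j(\Delta(\vx,\bm{\delta}+\bm{\delta}^{*}))\big].
\]
In other words, the perturbation acts on the smoothed classifier exactly like a translation of the flow-domain noise distribution by the fixed vector $\bm{\delta}^{*}$, whose $L_1$-norm is at most $\rho$.

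Next I would pass to integrals against the product-Laplace density. Writing the density of $\mathcal{L}(\sigma)$ as $p(\bm{\delta}) \propto \exp\!\big(-\sqrt{2}\,\|\bm{\delta}\|_1/\sigma\big)$ --- the factor $\sqrt{2}$ entering because $\sigma$ denotes the per-coordinate standard deviation rather than the scale parameter --- a translation change of variables (Jacobian $1$) gives $\barbf_j(\tbx) = \int \vf_j(\Delta(\vx,\bm{\delta}))\,p(\bm{\delta}-\bm{\delta}^{*})\,d\bm{\delta}$. By the triangle inequality, $\big|\,\|\bm{\delta}\|_1 - \|\bm{\delta}-\bm{\delta}^{*}\|_1\,\big| \le \|\bm{\delta}^{*}\|_1 \le \rho$ for every $\bm{\delta}$, which yields the pointwise bound $e^{-\sqrt{2}\rho/\sigma}\,p(\bm{\delta}) \le p(\bm{\delta}-\bm{\delta}^{*}) \le e^{\sqrt{2}\rho/\sigma}\,p(\bm{\delta})$. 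Since $\vf_j$ is nonnegative, integrating this inequality against $\vf_j(\Delta(\vx,\bm{\delta}))\ge 0$ gives, for \emph{every} class $j$,
\[
e^{-\sqrt{2}\rho/\sigma}\,\barbf_j(\vx) \;\le\; \barbf_j(\tbx) \;\le\; e^{\sqrt{2}\rho/\sigma}\,\barbf_j(\vx).
\]

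Finally I would combine the lower bound for class $i$ with the upper bound for each $i'\ne i$ and the hypothesis $\barbf_i(\vx)\ge e^{2\sqrt{2}\rho/\sigma}\max_{i'\ne i}\barbf_{i'}(\vx)$:
\[
\barbf_i(\tbx) \;\ge\; e^{-\sqrt{2}\rho/\sigma}\,\barbf_i(\vx) \;\ge\; e^{\sqrt{2}\rho/\sigma}\max_{i'\ne i}\barbf_{i'}(\vx) \;\ge\; \max_{i'\ne i}\barbf_{i'}(\tbx),
\]
which is exactly the claim. I do not expect a genuine obstacle inside this argument: the substantive work --- showing that a Wasserstein perturbation of size at most $\rho$ is realizable as a flow of $L_1$-norm at most $\rho$, which is precisely what makes the $L_1$ smoothing machinery applicable --- is already packaged into Lemma~\ref{wassflow}. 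Within this proof, the only points requiring care are that the likelihood-ratio/integral step genuinely needs $\vf_j\ge 0$ (it would fail for signed integrands), and the bookkeeping of the $\sqrt{2}$ relating the Laplace standard deviation to its scale parameter, which is what promotes the naive $e^{2\rho/\sigma}$ to the stated $e^{2\sqrt{2}\rho/\sigma}$.
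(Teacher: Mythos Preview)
Your proposal is correct and follows essentially the same route as the paper: invoke Lemma~\ref{wassflow} to realize the Wasserstein perturbation as a flow $\bm{\delta}^{*}$ with $\|\bm{\delta}^{*}\|_1\le\rho$, use additivity (Equation~\ref{flowadd}) so that the perturbation becomes an $L_1$-bounded shift of the smoothing noise, and then apply the Laplace likelihood-ratio bound. The only presentational differences are that the paper introduces an auxiliary reference distribution $\vu$ and a flow-domain classifier $\vf^{\text{flow}}$ before invoking \cite{lecuyer2018certified} as a black box, whereas you work directly with $\barbf$ and inline the density-ratio computation; your version is marginally more streamlined but not a different argument.
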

\begin{proof}
Let $\vu$ be the uniform probability vector. As a consequence of Lemma \ref{wassflow}, for any distribution  $\vx$, there exists a nonempty set of local flow plans $S_\vx$:
\begin{equation}
    S_\vx = \{\bm{\delta}| \vx = \Delta(\vu, \bm{\delta})\}
\end{equation}
Also, we may define a version of the classifier $\vf$ on the local flow plan domain:
\begin{equation}
    \vf^{\text{flow}}(\bm{\delta}) = \vf(\Delta(\vu, \bm{\delta}))
\end{equation}
Let $\bm{\delta}_x$ be an arbitrary element in $S_x$, and consider any perturbed $\tbx$ such that $W_1(\vx,\tbx) \leq \rho$. By Theorem \ref{wassflow}:
\begin{equation}
    \min_{ \bm{\delta}: \,\, \tbx =  \Delta(\vx,\bm{\delta}) } \|\bm{\delta}\|_1 = W_{1}(\vx,\tbx)
\end{equation}
Then, using Equation \ref{flowadd}:
\begin{equation}
     \min_{ \bm{\delta}: \,\, \tbx =  \Delta(\vu, \bm{\delta}_\vx + \bm{\delta}) } \|\bm{\delta}\|_1 = W_{1}(\vx,\tbx)
\end{equation}
Let the minimum be achieved at $\bm{\delta}^*$.
Making a change of variables ($\bm{\delta}_{\tbx} = \bm{\delta}^* + \bm{\delta}_x$), we have:
\begin{equation}
      \|\bm{\delta}_{\tbx}-\bm{\delta}_\vx\|_1 = W_{1}(\vx,\tbx)\quad\text{where  } \tbx =  \Delta(\vu, \bm{\delta}_{\tbx})
\end{equation}
Note that for any $\vx'$ (for $\bm{\delta}' \sim \mathcal{L}(\sigma) $) :
\begin{equation} \label{flowequivance}
\begin{split}
   \barbf(\vx') =&{\E}\left[ \vf (\Delta(\vx', \bm{\delta}')\right]\\
   =& {\E}\left[ \vf (\Delta(\vu, \bm{\delta}_{\vx'}+ \bm{\delta}'))\right] \\
   =& {\E}\left[ \vf^{\text{flow}} (\bm{\delta}_{\vx'}+ \bm{\delta}'))\right]  
\end{split}
\end{equation}
We can now apply Proposition 1 from \cite{lecuyer2018certified}, restated here:
\begin{proposition*}
Consider a vector $\vv \in \R^{d}$, and a classification score function $\vh:  \R^{d} \rightarrow [0,1]^k $. Let $\epsilon \sim \laplace(0, \sigma)^{d}$, and let $i$ be the class assignment of $\vv$ using a Laplace-smoothed version of the classifier $\vh$:
\begin{equation}
i = \arg\max_{i'} \underset{\epsilon}{\E}\left[ \vh_{i'}(\vv+\epsilon) \right]
\end{equation}
If:
\begin{equation}
   \underset{\epsilon}{\E}\left[ \vh_{i}(\vv+\epsilon) \right] \geq e^{2\sqrt{2}\rho / \sigma} \max_{i' \neq i} \underset{\epsilon }{\E}\left[ \vh_{i'}(\vv + \epsilon) \right]
\end{equation}
Then for any perturbed probability distribution $\tbv$ such that $\|\vv-\tbv\|_1 \leq \rho$:
\begin{equation}
\underset{\epsilon}{\E}\left[ \vh_{i}(\tbv+\epsilon) \right] \geq \max_{i' \neq i} \underset{\epsilon }{\E}\left[ \vh_{i'}(\tbv + \epsilon) \right]
\end{equation}
\end{proposition*}
We apply this proposition to $\vf^{\text{flow}}$, noting that $ \|\bm{\delta}_{\tbx}-\bm{\delta}_\vx\|_1 = W_{1}(\vx,\tbx) \leq \rho$:
\begin{equation}
\begin{split}
    \underset{\bm{\delta}'}{\E}\left[ \vf^{\text{flow}}_i (\bm{\delta}_{\vx}+ \bm{\delta}'))\right] \geq e^{2\sqrt{2}\rho / \sigma} \max_{i' \neq i} \underset{\bm{\delta}' }{\E}\left[ \vf^{\text{flow}}_{i'} (\bm{\delta}_{\vx}+ \bm{\delta}'))\right]\\
 \implies \underset{\bm{\delta}'}{\E}\left[ \vf^{\text{flow}}_i (\bm{\delta}_{\tbx}+ \bm{\delta}'))\right] \geq  \max_{i' \neq i} \underset{\bm{\delta}' }{\E}\left[ \vf^{\text{flow}}_{i'} (\bm{\delta}_{\tbx}+ \bm{\delta}'))\right] 
\end{split}
\end{equation}
Then, using Equation \ref{flowequivance}:
\begin{equation}
\begin{split}
    \barbf_i(\vx) \geq e^{2\sqrt{2}\rho / \sigma} \max_{i' \neq i} \barbf_{i'}(\vx) &\implies \\
\barbf_i(\tbx) \geq  \max_{i' \neq i} \barbf_{i'}(\tbx) &
\end{split}
\end{equation}
Which was to be proven.
\end{proof}
\begin{corollary} \label{laplace-corol}
For any normalized probability distributions $\vx, \vx' \in[0,1]^{n \times m}$, if $W_1(\vx, \vx') \leq \rho / 2$, then $\|\vx - \vx'\|_1 \leq \rho$, where $W_1$ is the 1-Wasserstein metric using any $L_p$ norm as the underlying distance metric. Furthermore, there exist distributions where these inequalities are tight.
\end{corollary}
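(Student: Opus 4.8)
The plan is to argue directly from the Kantorovich linear program of Definition~\ref{wassdef}, exploiting the fact that for any $L_p$ norm with $p\ge 1$ the underlying distance $d\big((i,j),(i',j')\big)$ between two \emph{distinct} pixels of the integer grid is at least $1$, and is $0$ when the pixels coincide. Consequently the cost matrix satisfies the entrywise bound $C_{(i,j),(i',j')} \ge \mathbf{1}\big[(i,j)\neq(i',j')\big]$, and this single observation does all the work.

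First I would fix an optimal transport plan $\Pi$ for $W_1(\vx,\vx')$, so that $\langle\Pi,C\rangle = W_1(\vx,\vx')$, and introduce the total off-diagonal mass $M := \sum_{(i,j)\neq(i',j')}\Pi_{(i,j),(i',j')}$, i.e.\ the amount of probability mass that $\Pi$ actually moves. The cost lower bound gives immediately $M \le \langle\Pi,C\rangle = W_1(\vx,\vx')$.

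Next I would relate $\|\vx-\vx'\|_1$ to $M$ via the marginal constraints $\Pi\1=\vx$ and $\Pi^T\1=\vx'$. Writing $x_{i,j}$ as a row sum and $x'_{i,j}$ as a column sum of $\Pi$ and cancelling the common diagonal term $\Pi_{(i,j),(i,j)}$,
\begin{equation*}
  x_{i,j}-x'_{i,j} = \sum_{(i',j')\neq(i,j)}\Pi_{(i,j),(i',j')} \;-\; \sum_{(i',j')\neq(i,j)}\Pi_{(i',j'),(i,j)} .
\end{equation*}
Applying the triangle inequality and summing over $(i,j)$, each off-diagonal entry of $\Pi$ is counted at most twice (once as an outgoing term, once as an incoming term), so $\|\vx-\vx'\|_1 \le 2M \le 2\,W_1(\vx,\vx') \le 2\cdot(\rho/2) = \rho$, which is the claim.

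For the tightness statement I would take $\vx$ to be the point mass at a single pixel and $\vx'$ the point mass at an axis-adjacent pixel: then $W_1(\vx,\vx')=1$ — one unit of mass must travel a distance of at least $1$, and moving it to the neighbouring pixel achieves exactly $1$ — while $\|\vx-\vx'\|_1 = 2$, so both inequalities are equalities when $\rho=2$. There is no genuine obstacle in this proof; the only point requiring a word of care is the entrywise bound $C \ge \mathbf{1}[\cdot\neq\cdot]$, i.e.\ that distinct integer-grid points are never closer than $1$ in the chosen norm — true for every $L_p$ norm ($p\ge 1$), and failing only for the degenerate ``$L_p$'' with $p<1$, which is not a norm.
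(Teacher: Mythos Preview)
Your proof is correct and follows essentially the same route as the paper: fix an optimal plan $\Pi$, discard the diagonal (the paper sets it to zero to form $\Pi'$, you cancel it in the difference $x_{i,j}-x'_{i,j}$), use that off-diagonal cost entries are at least $1$ for any $L_p$ norm on the integer grid, and then bound $\|\vx-\vx'\|_1$ by twice the off-diagonal mass via the triangle inequality. Your tightness example (two adjacent point masses, giving $W_1=1$ and $\|\vx-\vx'\|_1=2$) differs from the paper's (a point mass versus a $50/50$ split across adjacent pixels, giving $W_1=1/2$ and $\|\vx-\vx'\|_1=1$), but both are valid witnesses.
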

\begin{proof}
Let $\Pi$ indicate the optimal transport plan between $\vx$ and $\vx'$. From Definition \ref{wassdef}, we have  $\Pi \1 = \vx$ and $\Pi^T\1 = \vx'$. Then:
\begin{equation}
    (\Pi^T - \Pi )\1 = \vx'- \vx
\end{equation}
Let $\Pi'$ represent a modified version of $\Pi$, with the diagonal elements set to zero. Note that $<\Pi',C> = <\Pi,C>$ and  $\Pi^T - \Pi  = (\Pi')^T - \Pi' $. Then, using triangle inequality:
\begin{equation}
\begin{split}
   & \|(\Pi')^T\1 \|_1 +  \|(\Pi')\1 \|_1\\
   \geq &\| ((\Pi')^T - \Pi' )\1\|_1 \\
   =& \|\vx'- \vx\|_1
\end{split}
\end{equation}
Because the elements of $\Pi'$ are non-negative, this is simply:
\begin{equation}
 2\sum_{i,j} \Pi'_{i,j} \geq \| ((\Pi')^T - \Pi' )\1\|_1 = \|\vx'- \vx\|_1
\end{equation}
Then, because the (non-diagonal) elements of $C$ are at least $1$ for any $L_p$ norm, we have,
\begin{equation}
2<\Pi',C>\geq 2\sum_{i,j} \Pi'_{i,j} \geq  \|\vx'- \vx\|_1
\end{equation}
Because $<\Pi',C> = <\Pi,C> = W_1(\vx,\vx')$, this means that  $\|\vx'- \vx\|_1 \leq 2 W_1(\vx,\vx') \leq \rho$, which was to be proven.
Note that this inequality can be tight. For example, let $\vx$ be the distribution where the entire probability mass is at position $(i,j)$, and $\vx'$ be the distribution where the probability mass is equally split between at positions $(i,j)$ and $(i+1, j)$. (In other words, $\vx_{(i,j)} = 1, \vx'_{(i,j)} = .5, \vx'_{(i+1,j)} = .5$). In this case, $\|\vx'- \vx\|_1 =1$, $W_1(\vx,\vx') =.5$.
\end{proof}
\begin{corollary} \label{threeD}
Consider a color image with three channels, denoted $\bx = [\bx^R,\bx^G,\bx^B]$, normalized such that $\sum_{(i,j)}  \bx^R_{(i,j)}+\bx^G_{(i,j)}+\bx^B_{(i,j)} = 1$. Consider a perturbed image $\tbx$ such that  $\forall K \in \{R,G,B\}, \, \sum_{(i,j)}  \bx^K_{(i,j)} = \sum_{(i,j)} \tbx^K_{(i,j)}$. Let $W_1(\bx,\tbx)$ denote the 1-Wasserstein distance (with $L_1$ distance metric) between $\bx$ and $\tbx$, where, when determining the minimum transport plan, transport between channels is not permitted. Using this definition, let $W_1(\vx,\tbx) \leq \rho$. Define:
\begin{equation} \label{defrgb}
\begin{split}
        \bm{\delta} &= \{\bm{\delta}^R,\bm{\delta}^G, \bm{\delta}^B\}\\
        \Delta(\vx,\bm{\delta})&=  \{\Delta(\vx^R,\bm{\delta^R}),\Delta(\vx^G,\bm{\delta^G}),\Delta(\vx^B,\bm{\delta^B})\}
\end{split}
\end{equation}
and let $\mathcal{L}^{\text{color}}(\sigma)$ represent independent draws of Laplace noise each with standard deviation $\sigma$ in the shape of $\bm{\delta}$. Then if 
\begin{equation}
    \barbf_{i}(\vx) \geq e^{2\sqrt{2}\rho / \sigma} \max_{i' \neq i} \barbf_{i'}(\vx)
\end{equation}
then
\begin{equation}
    \barbf_{i}(\tbx) \geq \max_{i' \neq i} \barbf_{i'}(\tbx).
\end{equation}
\end{corollary}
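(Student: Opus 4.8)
The plan is to reduce the color case to three independent copies of the grayscale case and then re-run the proof of Theorem~\ref{smoothwas} essentially verbatim on the stacked flow domain of dimension $3r$. The only genuine work is to check that forbidding transport between channels makes both the optimal transport program of Definition~\ref{wassdef} and the local-flow reformulation of Lemma~\ref{wassflow} split as a direct sum over $\{R,G,B\}$; given that, a pre-existing $L_1$ randomized-smoothing certificate on the product space is exactly the componentwise statement we need.

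First I would record a mass-agnostic version of Lemma~\ref{wassflow}. Its proof never uses $\sum_{i,j}\vx_{i,j}=1$ except to exhibit a feasible transport plan $\Pi=\vx(\vx')^T$; the equivalence with the linear program of \cite{ling2007efficient} and all the $L_1$ bookkeeping are purely algebraic. Hence for any nonnegative $\vx,\vx'\in\R^{n\times m}$ with $\sum_{i,j}\vx_{i,j}=\sum_{i,j}\vx'_{i,j}=c$, taking $\Pi=\vx(\vx')^T/c$ (the trivial plan if $c=0$) shows a feasible local flow plan exists and that $W_1(\vx,\vx')=\min_{\bm{\delta}:\,\vx'=\Delta(\vx,\bm{\delta})}\|\bm{\delta}\|_1$ for distributions of equal total mass, with $L_1$ as the underlying distance metric. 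Next comes the channel decomposition: since $\bx^K$ and $\tbx^K$ have equal mass for each $K$ by hypothesis, any transport plan that moves no mass between channels is block diagonal, so its cost is $\sum_K\langle\Pi^K,C\rangle$ and the constrained program minimizes channelwise, giving $W_1(\bx,\tbx)=\sum_K W_1(\bx^K,\tbx^K)$. On the flow side, with $\bm{\delta}=\{\bm{\delta}^R,\bm{\delta}^G,\bm{\delta}^B\}$, $\Delta(\bx,\bm{\delta})$ as in \eqref{defrgb}, and $\|\bm{\delta}\|_1=\sum_K\|\bm{\delta}^K\|_1$, the equation $\tbx=\Delta(\bx,\bm{\delta})$ decouples over channels, so the previous step yields $\min_{\bm{\delta}:\,\tbx=\Delta(\bx,\bm{\delta})}\|\bm{\delta}\|_1=\sum_K W_1(\bx^K,\tbx^K)=W_1(\bx,\tbx)$, with a feasible $\bm{\delta}$ always existing; this is the color analogue of Lemma~\ref{wassflow}.

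With that in hand I would mimic the proof of Theorem~\ref{smoothwas}. Fix a reference color image $\vr=\{\vr^R,\vr^G,\vr^B\}$ whose per-channel masses equal those of $\bx$ (say each $\vr^K$ uniform with the correct total mass), so that $\bx$ and every admissible $\tbx$ lie in the range of $\Delta(\vr,\cdot)$; choose $\bm{\delta}_{\bx}$ with $\bx=\Delta(\vr,\bm{\delta}_{\bx})$ and set $\vf^{\text{flow}}(\bm{\delta}):=\vf(\Delta(\vr,\bm{\delta}))$. For $\tbx$ with $W_1(\bx,\tbx)\le\rho$ the color analogue provides $\bm{\delta}^*$ with $\tbx=\Delta(\bx,\bm{\delta}^*)$ and $\|\bm{\delta}^*\|_1\le\rho$, and applying \eqref{flowadd} channelwise gives $\tbx=\Delta(\vr,\bm{\delta}_{\bx}+\bm{\delta}^*)$, so $\bm{\delta}_{\tbx}:=\bm{\delta}_{\bx}+\bm{\delta}^*$ satisfies $\|\bm{\delta}_{\tbx}-\bm{\delta}_{\bx}\|_1\le\rho$. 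Since $\mathcal{L}^{\text{color}}(\sigma)$ is precisely $\laplace(0,\sigma)^{3r}$ on the product flow domain and $\barbf(\vx')=\E_{\bm{\delta}'\sim\mathcal{L}^{\text{color}}(\sigma)}[\vf^{\text{flow}}(\bm{\delta}_{\vx'}+\bm{\delta}')]$ for any choice $\bm{\delta}_{\vx'}$ in the corresponding flow set, I would apply Proposition~1 of \cite{lecuyer2018certified} in dimension $3r$ with the $L_1$ norm to $\vf^{\text{flow}}$ at the point $\bm{\delta}_{\bx}$: the hypothesis $\barbf_i(\bx)\ge e^{2\sqrt{2}\rho/\sigma}\max_{i'\ne i}\barbf_{i'}(\bx)$ is exactly its premise, and $\|\bm{\delta}_{\tbx}-\bm{\delta}_{\bx}\|_1\le\rho$ delivers $\barbf_i(\tbx)\ge\max_{i'\ne i}\barbf_{i'}(\tbx)$, which is the claim.

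\textbf{Main obstacle.} The only nontrivial point is the channel-decomposition step: making precise that the ``no inter-channel transport'' convention genuinely block-diagonalizes the transport LP (so that $W_1$ is additive over channels) and that the local-flow reformulation together with its $L_1$ objective respects the same block structure, together with the bookkeeping that the reference image must be adapted to the fixed per-channel masses of $\bx$ so that the relevant sets of feasible flow plans stay nonempty. Everything else is a verbatim transcription of the proofs of Lemma~\ref{wassflow} and Theorem~\ref{smoothwas} with $r$ replaced by $3r$, and no extra constant appears because the corollary is already stated for the $L_1$ underlying metric.
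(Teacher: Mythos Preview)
Your proposal is correct and follows essentially the same route as the paper: block-diagonalize the transport LP to get $W_1(\bx,\tbx)=\sum_K W_1(\bx^K,\tbx^K)$, establish the color analogue of Lemma~\ref{wassflow}, and then replay the proof of Theorem~\ref{smoothwas} on the $3r$-dimensional flow domain. The only cosmetic difference is that the paper normalizes each channel by its total mass $s_K$ in order to apply Lemma~\ref{wassflow} verbatim and then scales back by linearity, whereas you instead observe directly that the proof of Lemma~\ref{wassflow} never uses unit total mass (only equal total mass), which is an equally valid and slightly cleaner way to reach the same identity.
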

\begin{proof}
Let the mass in each channel be denoted $s_{K}$:
\begin{equation}
    s_{K} :=  \sum_{(i,j)}  \bx^K_{(i,j)} = \sum_{(i,j)} \tbx^K_{(i,j)}
\end{equation}
Consider the formulation of Wasserstein distance given in Definition \ref{wassdef}. If we represent the elements of $\vx$ as a vector by concatenating the elements of $\bm{\delta}^R,\bm{\delta}^G,$ and  $\bm{\delta}^B$, then the restriction that there is no flow between channels amounts to the requirement that $\Pi$ is block-diagonal:
\begin{equation} \label{piblock}
\Pi = 
\begin{bmatrix}
    \Pi^R  & \0 & \0 \\
    \0  & \Pi^G & \0 \\
    \0 & \0 & \Pi^B \\
\end{bmatrix}
\end{equation}
Let $C^{1,1}$ represent the standard cost matrix for 1-Wasserstein transport  (with $L_1$ distance metric). Because the cost of transport within each channel is the same for standard 1-Wasserstein transport  (with $L_1$ distance metric), we have:
\begin{equation}
C = 
\begin{bmatrix}
    C^{1,1}  & \0 & \0 \\
    \0  & C^{1,1} & \0 \\
    \0 & \0 & C^{1,1} \\
\end{bmatrix}
\end{equation}
Then we have:
\begin{equation}
    <\Pi,C> = <\Pi^R,C^{1,1}> +<\Pi^G,C^{1,1}> +<\Pi^B,C^{1,1}> 
\end{equation}
And by Equation \ref{piblock}, the constraints also factorize out:
\begin{equation}
\begin{split}
       \Pi^R\1=\vx^R, \,\, (\Pi^R)^T\1=\tbx^R, \\\Pi^B\1=\vx^B, \,\, (\Pi^B)^T\1=\tbx^B,  \\\Pi^G\1=\vx^G, \,\, (\Pi^G)^T\1=\tbx^G  
\end{split}
\end{equation}
Then the variables of each $\Pi^K$ are separable (in that they appear together in the objective only in the sum and share no constraints). We can then factorize the minimization:
\begin{align}
    W_1(\vx,\tbx) = \sum_{K}\,\,& \min_{\Pi^K \in \R_{+}^{(n\cdot m) \times (n\cdot m)}} <\Pi^K,C^{(1,1)}>, \\
   \forall K, \,\, &\Pi^K\1=\vx^K, \,\, (\Pi^K)^T\1=\tbx^K \\
\end{align}
We can transform each $x^K$ into a normalized probability distribution by scaling it by a factor of $1/s_K$. We similarly scale each $\Pi^K$ :
\begin{equation}
    \vx^K_{\text{sc.}} := \frac{\vx^K}{s_K}\,\,\,\,  \Pi^K_{\text{sc.}} := \frac{\Pi^K}{s_K}
\end{equation}
Then we have:
\begin{align}
    W_1(\vx,\tbx) = \sum_{K} s_K \cdot& \min_{\Pi^K_{\text{sc.}}  \in \R_{+}^{(n\cdot m) \times (n\cdot m)}} <\Pi^K_{\text{sc.}} ,C^{(1,1)}>, \\
   \forall K, \,\, &\Pi^K_{\text{sc.}} \1=\vx^K_{\text{sc.}} , \,\, (\Pi^K_{\text{sc.}})^T \1=\tbx^K_{\text{sc.}}  \\
\end{align}
But note that this is simply:
\begin{align}
    W_1(\vx,\tbx) = \sum_{K} s_K \cdot W_1(\vx^K_{\text{sc.}},\tbx^K_{\text{sc.}}) 
\end{align}
By Lemma \ref{wassflow}, this is:
\begin{equation}
    W_1(\vx,\tbx) = \sum_{K} s_K \cdot \min_{ \bm{\delta^K_{sc.}}: \,\, \tbx^K_{sc.} =  \Delta(\vx^K_{sc.},\bm{\delta}^K_{sc.}) } \|\bm{\delta}^K_{sc.}\|_1 
\end{equation}
By the linearity to scaling of $\Delta$ and the $L_1$ norm, this is simply:
\begin{equation}
    W_1(\vx,\tbx) = \sum_{K}\min_{ \bm{\delta^K}: \,\, \tbx^K =  \Delta(\vx^K,\bm{\delta}^K) } \|\bm{\delta}^K\|_1 
\end{equation}
Which, by Equation \ref{defrgb}, is simply,
\begin{equation}
    W_1(\vx,\tbx) = \min_{ \bm{\delta}: \,\, \tbx =  \Delta(\vx,\bm{\delta}) } \|\bm{\delta}\|_1 
\end{equation}
Then all of the mechanics of the proof of Theorem \ref{smoothwas} apply, and (avoiding unnecessary repetition), we conclude the result.
\end{proof}
\section{Comparison to other Defences in \cite{pmlr-v97-wong19a}}
\begin{figure}[h!]
    \centering
    \includegraphics[width=0.48\textwidth,trim={0.8cm 0 1.7cm 1cm},clip]{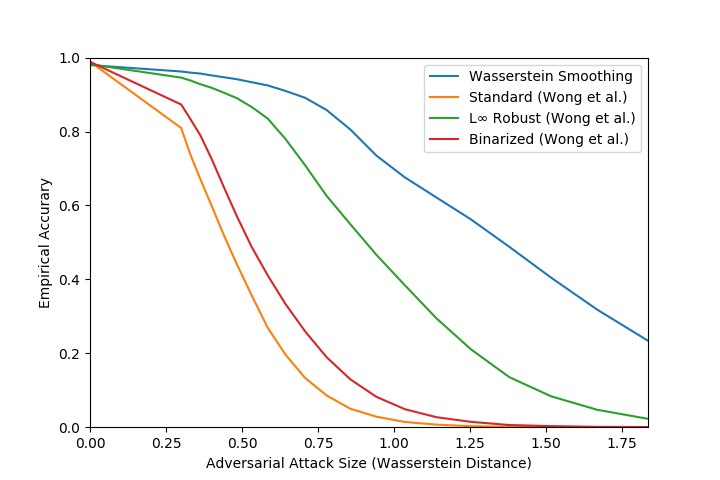}
    \caption{Comparison of empirical robustness on MNIST to additional defences from \citep{pmlr-v97-wong19a}, other than adversarial training. Randomized Smoothing shown here is Wasserstein smoothing with $\sigma = 0.01$. (This is the amount of noise which maximizes certified robustness, as seen in Table \ref{MNISTdata}.)  }
    \label{fig:mnist_emp_apdx}
\end{figure}
In addition to proposing adversarial training as a defence against Wasserstein Adversarial attacks, \cite{pmlr-v97-wong19a} also tests other defences. On MNIST, binarization of the input and using a provably $L_\infty$-robust classifier were also tested as defences: our randomized smoothing method is more effective than these methods at all attack magnitudes (see Figure \ref{fig:mnist_emp_apdx}). On CIFAR-10, \cite{pmlr-v97-wong19a} only tested a provably $L_\infty$-robust classifier as an additional defence: unfortunately, code was not provided for this model, so we did not attempt to replicate the results.
\section{Training Parameters}
In this paper, network architectures models used were identical to those used in \cite{pmlr-v97-wong19a}. Unless stated otherwise, all parameters of attacks are the same as used in that paper for each data set. For training smoothed models, we train the base classifier using standard cross-entropy loss on individual noised sample images, using the same noise distribution as used when performing smoothed classification. However, during training, rather than using the same image repeatedly while adding different noise (as at test time), we instead train with each image only once per epoch, with one noise draw. In fact, for computational efficiency and as suggested by \cite{lecuyer2018certified}, we re-use the same noise for each image in a batch. Training parameters are as follows (Tables \ref{tab:MNISTtrain}, \ref{tab:CIFARtrain}):
\begin{table}[h!]
    \centering
    \begin{tabular}{|c|c|}
        \hline
        Training Epochs & 200\\
        \hline
        Batch Size & 128\\
        \hline
        Optimizer & Stochastic Gradient \\
        &Descent with Momentum\\
        \hline
        Learning Rate & .001 \\
        \hline
        Momentum & 0.9 \\
        \hline
        $L_2$ Weight Penalty & 0.0005 \\
        \hline
    \end{tabular}
    \caption{Training Parameters for MNIST Experiments}
    \label{tab:MNISTtrain}
\end{table}
\begin{table}[h!]
    \centering
    \begin{tabular}{|c|c|}
        \hline
        Training Epochs & 200\\
        \hline
        Batch Size & 128\\
        \hline
        Training Set & Normalization, \\
        Preprocessing & Random Cropping (Padding:4)\\
        & and Random Horizontal Flip\\
        \hline
        Optimizer & Stochastic Gradient \\
        &Descent with Momentum\\
        \hline
        Learning Rate & .01 (Epochs 1-200) \\
        & .001 (Epochs 201-400)\\
        \hline
        Momentum & 0.9 \\
        \hline
        $L_2$ Weight Penalty & 0.0005 \\
        \hline
    \end{tabular}
    \caption{Training Parameters for CIFAR-10 Experiments}
    \label{tab:CIFARtrain}
\end{table}
\\\\.
\end{document}